%% file: main.tex
\documentclass{article}

\input{math_commands.tex}

\usepackage{tikz}
\usetikzlibrary{patterns}
\newcommand{\hatchedbox}[1][0.25cm]{%
  \tikz[baseline=0.ex] \draw[pattern=north east lines, pattern color=black] (0,0) rectangle (#1,#1);%
}

\usepackage{tcolorbox}
\tcbuselibrary{skins,breakable,listings}
\newtcolorbox{mdlmBox}{
  breakable,
  colback=red!6,
  colframe=red!60!black,
  title=\textbf{MDLM},
  fonttitle=\small\bfseries,
  boxrule=0.6pt,
  arc=2pt,
  left=6pt,
  right=6pt,
  top=4pt,
  bottom=4pt
}
\newtcolorbox{duoBox}{
  breakable,
  colback=blue!6,
  colframe=blue!60!black,
  title=\textbf{DUO + DCD},
  fonttitle=\small\bfseries,
  boxrule=0.6pt,
  arc=2pt,
  left=6pt,
  right=6pt,
  top=4pt,
  bottom=4pt
}
\newtcolorbox{cdlmBox}{
  breakable,
  colback=green!6,
  colframe=green!60!black,
  title=\textbf{CDLM (Ours)},
  fonttitle=\small\bfseries,
  boxrule=0.6pt,
  arc=2pt,
  left=6pt,
  right=6pt,
  top=4pt,
  bottom=4pt
}
\newtcolorbox{sdttBox}{
  breakable,
  colback=brown!6,
  colframe=brown!65!black,
  title=\textbf{SDTT},
  fonttitle=\small\bfseries,
  boxrule=0.6pt,
  arc=2pt,
  left=6pt,
  right=6pt,
  top=4pt,
  bottom=4pt
}

\usepackage{microtype}
\usepackage{graphicx}
\usepackage{subcaption}
\usepackage{booktabs} % for professional tables

\usepackage{hyperref}

\usepackage[accepted]{icml2026}

\usepackage{amsmath}
\usepackage{amssymb}
\usepackage{mathtools}
\usepackage{amsthm}

\usepackage[capitalize,noabbrev]{cleveref}

\theoremstyle{plain}
\newtheorem{theorem}{Theorem}[section]
\newtheorem{proposition}[theorem]{Proposition}
\newtheorem{lemma}[theorem]{Lemma}

\theoremstyle{definition}
\newtheorem{definition}[theorem]{Definition}

\theoremstyle{remark}
\newtheorem{remark}[theorem]{Remark}

\usepackage[textsize=tiny]{todonotes}

\icmltitlerunning{Consistent Diffusion Language Models}

\begin{document}

\twocolumn[
  \icmltitle{Consistent Diffusion Language Models}

  % It is OKAY to include author information, even for blind submissions: the
  % style file will automatically remove it for you unless you've provided
  % the [accepted] option to the icml2026 package.

  % List of affiliations: The first argument should be a (short) identifier you
  % will use later to specify author affiliations Academic affiliations
  % should list Department, University, City, Region, Country Industry
  % affiliations should list Company, City, Region, Country

  % You can specify symbols, otherwise they are numbered in order. Ideally, you
  % should not use this facility. Affiliations will be numbered in order of
  % appearance and this is the preferred way.
  \icmlsetsymbol{equal}{*}

  \begin{icmlauthorlist}
    \icmlauthor{Hasan Amin}{equal,purdue}
    \icmlauthor{Yuan Gao}{equal,ms}
    \icmlauthor{Yaser Souri}{ms}
    \icmlauthor{Subhojit Som}{ms}
    \icmlauthor{Ming Yin}{purdue}
    \icmlauthor{Rajiv Khanna}{purdue}
    \icmlauthor{Xia Song}{ms}
  \end{icmlauthorlist}

  \icmlaffiliation{purdue}{Department of Computer Science, Purdue University}
  \icmlaffiliation{ms}{Microsoft}

  \icmlcorrespondingauthor{Hasan Amin}{hasanamin@purdue.edu}

  % You may provide any keywords that you find helpful for describing your
  % paper; these are used to populate the "keywords" metadata in the PDF but
  % will not be shown in the document
  \icmlkeywords{Machine Learning, ICML}

  \vskip 0.3in
]

% this must go after the closing bracket ] following \twocolumn[ ...

% This command actually creates the footnote in the first column listing the
% affiliations and the copyright notice. The command takes one argument, which
% is text to display at the start of the footnote. The \icmlEqualContribution
% command is standard text for equal contribution. Remove it (just {}) if you
% do not need this facility.

% Use ONE of the following lines. DO NOT remove the command.
% If you have no special notice, KEEP empty braces:
\printAffiliationsAndNotice{$^\ast$Equal contribution. Work done during an internship at Microsoft.}  % no special notice (required even if empty)
% Or, if applicable, use the standard equal contribution text:
% \printAffiliationsAndNotice{\icmlEqualContribution}

\begin{abstract}
Diffusion language models (DLMs) are an attractive alternative to autoregressive models because they promise sublinear-time, parallel generation, yet practical gains remain elusive as high-quality samples still demand hundreds of refinement steps. In continuous domains, consistency training along the probability-flow ODE is a popular recipe to accelerate diffusion. For discrete diffusion, no analogous sample-space ODE exists, making direct adaptation ill-defined. We argue that the right discrete substitute is the exact posterior bridge---the closed-form conditional law linking any two noise levels---which is available for broad corruptions including masked and uniform diffusion. Building on this observation, we introduce Multi-Path Discrete Consistency (MPDC), a new principle that trains a denoiser to be path-invariant in expectation across these stochastic bridges, and instantiate it as the Consistent Diffusion Language Model (CDLM), a single-stage training framework that does not require an already trained teacher model. Our CDLM objective recovers masked diffusion, continuous consistency models, and progressive or discrete distillation as analytic limits or empirical approximations of one common view. Empirically, CDLM establishes a new state of the art on both conditional and unconditional text-generation, consistently outperforming strong base discrete diffusion models and often even multi-stage distilled baselines across sampling budgets, with the largest gains in the few-step regime. Together, these results position CDLM as a principled and scalable foundation for the next generation of fast, high-fidelity discrete generative modeling.
\end{abstract}

\input{sections/intro}
\input{sections/setup}

\input{sections/method}
\input{sections/experiments}
\input{sections/discussion}

\section*{Impact Statement}
This paper advances fast, high-fidelity discrete generative modeling. By reducing the number of denoising steps needed for high-quality text generation, CDLM can lower the inference-time compute and energy cost of diffusion language models, broadening access to efficient generation. As with other language models, faster and cheaper generation can also amplify well-known risks, including the production of misleading or low-quality content. Nonetheless, CDLM introduces no new data sources or capabilities beyond those of standard diffusion language models, and existing mitigation and content-provenance practices apply directly. We do not foresee additional societal consequences that must be specifically highlighted here.

\bibliography{refs}
\bibliographystyle{icml2026}

%%%%%%%%%%%%%%%%%%%%%%%%%%%%%%%%%%%%%%%%%%%%%%%%%%%%%%%%%%%%%%%%%%%%%%%%%%%%%%%
%%%%%%%%%%%%%%%%%%%%%%%%%%%%%%%%%%%%%%%%%%%%%%%%%%%%%%%%%%%%%%%%%%%%%%%%%%%%%%%
% APPENDIX
%%%%%%%%%%%%%%%%%%%%%%%%%%%%%%%%%%%%%%%%%%%%%%%%%%%%%%%%%%%%%%%%%%%%%%%%%%%%%%%
%%%%%%%%%%%%%%%%%%%%%%%%%%%%%%%%%%%%%%%%%%%%%%%%%%%%%%%%%%%%%%%%%%%%%%%%%%%%%%%
\newpage
\appendix
\onecolumn

\input{sections/app}

%%%%%%%%%%%%%%%%%%%%%%%%%%%%%%%%%%%%%%%%%%%%%%%%%%%%%%%%%%%%%%%%%%%%%%%%%%%%%%%
%%%%%%%%%%%%%%%%%%%%%%%%%%%%%%%%%%%%%%%%%%%%%%%%%%%%%%%%%%%%%%%%%%%%%%%%%%%%%%%

\end{document}

%% file: math_commands.tex
\usepackage{amsmath,amsfonts,bm}

\def\eqref#1{equation~\ref{#1}}
\def\1{\bm{1}}

\def\vzero{{\bm{0}}}

\def\vk{{\bm{k}}}

\def\vp{{\bm{p}}}

\def\vw{{\bm{w}}}
\def\vx{{\bm{x}}}

\def\mI{{\bm{I}}}

\def\mQ{{\bm{Q}}}

\DeclareMathAlphabet{\mathsfit}{\encodingdefault}{\sfdefault}{m}{sl}
\SetMathAlphabet{\mathsfit}{bold}{\encodingdefault}{\sfdefault}{bx}{n}

\DeclareMathOperator*{\argmax}{arg\,max}

%% file: sections/intro.tex
\begin{figure}[h]
    \centering
    \begin{subfigure}{\linewidth}
        \centering
        \includegraphics[width=\linewidth]{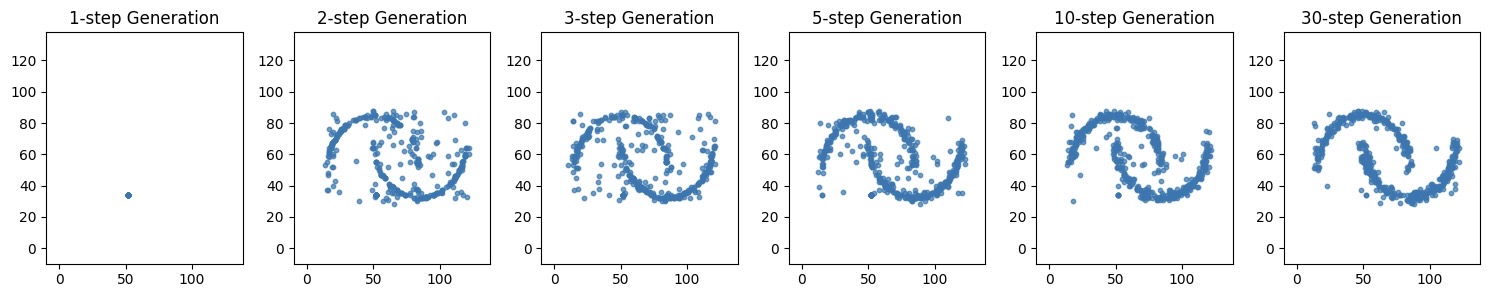}
        \label{fig:toy_diff}
    \end{subfigure}
    \hfill
    \begin{subfigure}{\linewidth}
        \centering
        \includegraphics[width=\linewidth]{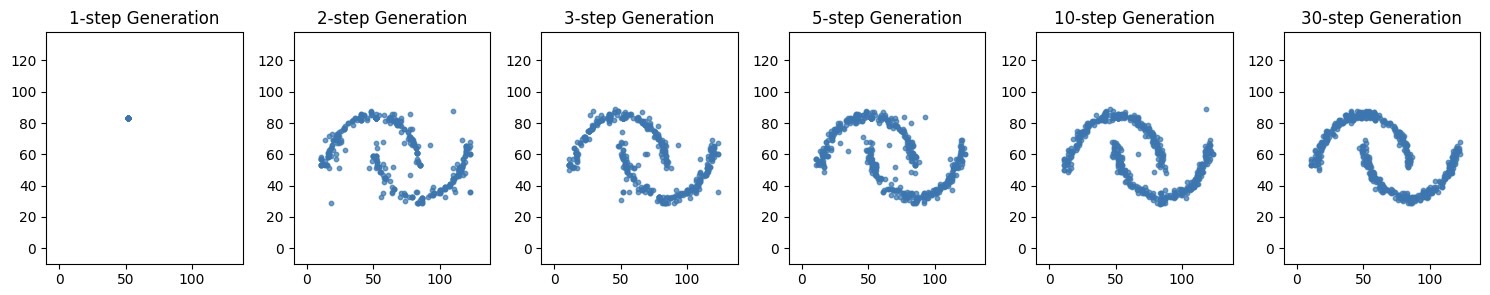}
        \label{fig:toy_cdlm}
    \end{subfigure}
    \caption{\textbf{Illustrative toy example on 2D moons under discrete diffusion.} 
    The continuous moons data are quantized into tokens and modeled as a language-like sequence. 
    Standard masked diffusion (top) forms sharp structure only after 10+ denoising steps, while CDLM (bottom) yields clear samples within 2--3 steps and continues to improve with larger budgets.}
    \label{fig:toy_teaser}
    \vspace{-12pt}
\end{figure}

\begin{figure*}[t]
\centering
\begin{minipage}{0.5\textwidth}
  \centering
  \includegraphics[width=\linewidth]{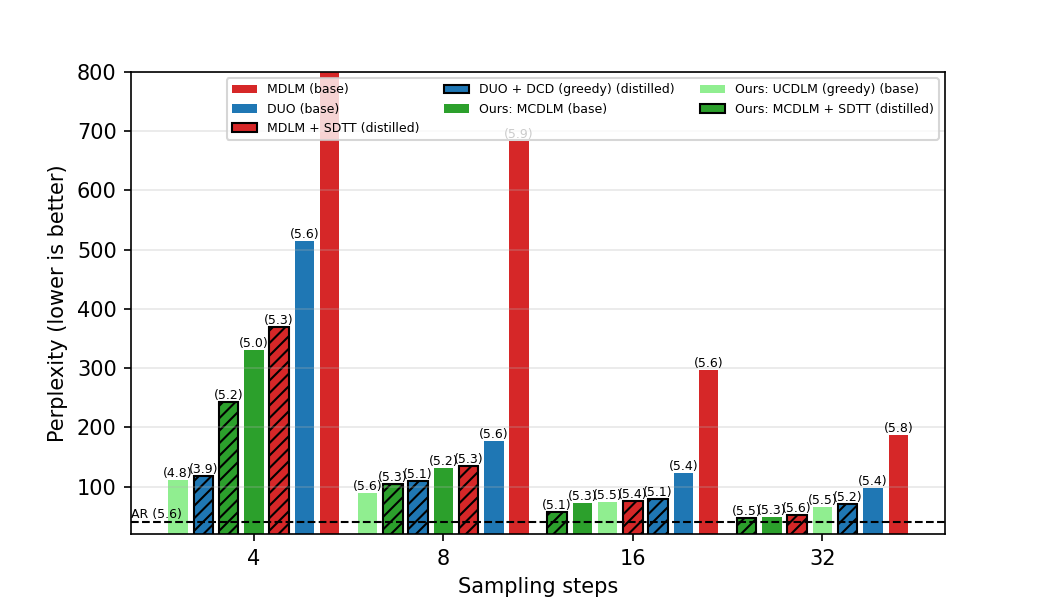}
  
\end{minipage}
\hfill
\begin{minipage}{0.49\textwidth}
  \centering
  \includegraphics[width=\linewidth]{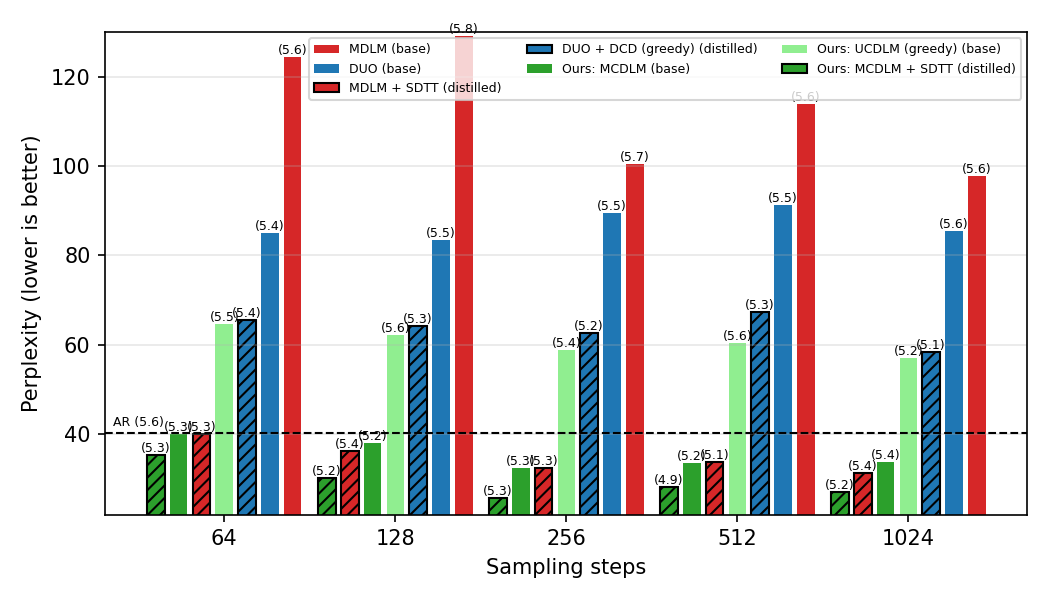}
  
\end{minipage}
% \caption{Perplexity (entropy) vs. sampling steps with 64-bit sampler for unconditional generation. Base models are without edges and hatches, while distilled models are indicated by shadow hatched bars \hatchedbox. We use \textcolor{red}{Red} for MDLM based models, \textcolor{blue}{Blue} for DUO based models, and \textcolor{green}{Green} for our CDLM based models (MCDLM and UCDLM denotes model with \textit{Masked} and \textit{Uniform} prior). We pick the best two models for each family, while including more details on Section \ref{sec:uncond_results}. For base MCDLM model, we chose MCDLM-PPLOptimized, a variant trained to achieve much better perplexity with slightly lower entropy, which outperforms all other base models for all sampling steps, and also beats distilled models under a majority of the steps while maintaining a similar entropy. Likewise, our distilled MCDLM further delivers best performance among distilled models with similar diversity. Note that DUO+DCD with greedy sampler has a significantly lower entropy (3.9) which often indicates poor sampling diversity and a biased perplexity.}
\caption{Unconditional generative perplexity (entropy in parentheses) vs.\ FP64 sampling steps on OpenWebText.
Hatched bars (\hatchedbox) denote distilled models, while colors distinguish \textcolor{red}{MDLM}, \textcolor{blue}{DUO}, and \textcolor{green}{CDLM} families (MCDLM and UCDLM denote CDLM with masked and uniform corruption, respectively).
CDLM offers the best single-stage model across budgets and often matches distilled baselines with healthier entropy, and incorporating explicit distillation leads to further improvements. See Section~\ref{sec:uncond_results} for more details.}
\label{fig:ppl_fp64_two_panels}
\end{figure*}

\section{Introduction}

Diffusion models have emerged as a dominant paradigm in generative modeling, achieving state-of-the-art results across continuous domains such as images, audio, and video \citep{yang2023diffusion}.
Their appeal lies in the simple principle of \emph{iterative refinement}: data are gradually corrupted into noise and then reconstructed step by step. This formulation has proven both scalable and versatile.

Recently, the diffusion paradigm has extended to language, where its promise lies not just in quality but in offering a fundamentally different path to scalable generation~\citep{austin2021structured}. Unlike autoregressive (AR) models constrained to sequential, left-to-right decoding, diffusion language models treat generation as iterative denoising and can update many token positions in parallel, offering a route to sublinear-time sampling in sequence length \citep{li2022diffusion}.
Among these, masked diffusion language models (MDLMs) have shown strong empirical results, rivaling AR baselines \citep{sahoo2024simple,nie2025scaling}.
However, the potential of DLMs remains largely untapped, as high-quality generation typically requires hundreds of refinement steps, eroding the efficiency gains the formulation was meant to deliver.
Accelerating DLMs without sacrificing quality is now a central open challenge.

In continuous domains, consistency models meet this challenge by training a network whose prediction is invariant along a single denoising trajectory~\citep{song2023consistency}.
That trajectory is defined by the \textit{probability-flow ordinary differential equation} (PF-ODE). For any noisy sample $\vx_t$, the ODE specifies a unique deterministic path linking $\vx_t$ to the clean data $\vx_0$.
Consistency training then matches the network's outputs at different times along this same path, which is what enables few- or even one-step generation after training.
\textit{Discrete categorical diffusion admits no analogous sample-space ODE.}
Corrupting or denoising a token sequence is inherently stochastic, as many distinct predecessor sequences can lead to the same noisy state.
Thus there is no unique path on which to enforce pointwise consistency, and direct imports of continuous consistency training are ill-defined without surrogates or multi-stage distillation~\citep{deschenaux2025beyond,sahoo2025the}.

% In continuous domains, acceleration techniques like consistency models \citep{song2023consistency} have helped meet the promise of diffusion by enabling effective few- or even one-step generation.
% These approaches critically rely on the probability flow ordinary differential equation (PF-ODE), which defines a unique, deterministic trajectory connecting any noisy point $\vx_t$ to the data $\vx_0$.
% Consistency is enforced by training the model to be invariant along this unique path.
% \textit{In discrete space, however, no analogous sample-space PF-ODE exists.} Categorical diffusion processes admit no single deterministic path along noise level transitions to apply the consistency principle on. %Since these transitions are stochastic, there are potentially many possible predecessor states at earlier noise levels.
% Consequently, naive discretization of continuous consistency models is ill-defined, leaving prior work unable to cleanly leverage these powerful acceleration principles without resorting to approximations or surrogate constructions to enforce multi-step consistency. % multi-stage distillation pipelines or continuous-relaxation surrogates.

We close this gap with \textit{Multi-Path Discrete Consistency (MPDC)}, a discrete analogue of consistency training.
Even though a unique deterministic path is absent, discrete diffusion already supplies an \textit{exact posterior bridge}---the conditional law $q(\vx_s \mid \vx_t, \vx_0)$ connecting any two noise levels $s<t$---in closed form for broad corruptions, including masked and uniform diffusion~\citep{austin2021structured}.
A denoising \emph{path} is then any composition of such bridges (a direct $t\!\to\!s$ jump or a longer chain).
MPDC requires a time-conditional predictor to agree across these paths \emph{in expectation}, replacing ODE path-invariance with bridge path-invariance.
Few-step generation emerges not as an approximation, but a consequence of such training: once long and short routes are equivalent in expectation, the model can \textit{choose to} take large denoising jumps at inference without relying on a teacher rollout.

Instantiating this principle, we propose the \textit{Consistent Diffusion Language Model (CDLM)}.
CDLM trains a predictor $f_\theta(\vx_t,t)$ so that its output at a noisier state $(\vx_t,t)$ matches the output at a cleaner state $(\vx_s,s)$ obtained by first sampling $\vx_s \sim q(\vx_s \mid \vx_t, \vx_0)$ on the true bridge---enforcing that denoising from $\vx_t$ \textit{is consistent with} hopping to $\vx_s$ first, then denoising.
We focus on masked text diffusion, but the recipe applies whenever posterior bridges are tractable.
Training on bridges of mixed length teaches the model to denoise across large time gaps, which is what makes high-quality generation possible in only a handful of steps.

The MPDC objective also unifies several acceleration paradigms as special or approximate cases of one bridge-consistency view.
Standard masked diffusion~\citep{sahoo2024simple,shi2024simplified} corresponds  to a max-step bridge anchor; continuous consistency models~\citep{song2023consistency} arise as the deterministic limit; progressive distillation~\citep{salimans2022progressive} and shortcut models~\citep{frans2025one} replace analytic bridges with rolled-out trajectories; and recent discrete distillation methods~\citep{deschenaux2025beyond,sahoo2025the} implement teacher-defined or comonotonic pseudo-bridges.
Within this taxonomy, CDLM is the teacher-free, single-stage member that supervises from the exact posterior bridge supplied by the discrete process itself, rather than from a learned teacher trajectory or continuous surrogate.
Empirically (Figure~\ref{fig:ppl_fp64_two_panels}), CDLM sets a new best among single-stage discrete diffusion models across sampling budgets and often matches or beats multi-stage distilled baselines while retaining healthier sample diversity.

% The MPDC view also provides a unifying lens on efficient diffusion modeling.
% We show that standard masked diffusion, continuous consistency training, progressive distillation and shortcut models, and recent two-stage discrete distillation methods such as SDTT and DUO-DCD can all be understood as limits, analogues, or approximate couplings of the same bridge-consistency principle.
% Within this view, CDLM is the teacher-free, single-stage member of the family that uses the exact posterior bridge supplied by the discrete diffusion process itself, rather than a learned teacher trajectory or continuous surrogate. \rajiv{this para is very powerful, can u add some citations here}

% We compare CDLM against strong base models trained from scratch, including MDLM \citep{sahoo2024simple} and DUO \citep{sahoo2025the}, as well as distilled models like SDTT \citep{deschenaux2025beyond} and DUO+DCD \citep{sahoo2025the}.
% As shown in Figure \ref{fig:ppl_fp64_two_panels}, CDLM establishes new state-of-the-art results for single-stage models across varied sampling budgets, often matching or outperforming multi-stage distilled models while maintaining superior diversity. \rajiv{this para can be combined with previous para by adding citations therein}

Our contributions are threefold:
\begin{enumerate}
\itemsep0em 

    \item \textit{A new principle for discrete generative modeling.} We introduce multi-path discrete consistency and show that exact posterior bridges are a rigorous, closed-form substitute for the absent PF-ODE, turning the main obstacle to discrete acceleration into an analytic asset.
    \item \textit{A unified and general training framework.} We present a self-contained CDLM objective for training path-invariant discrete denoisers, applicable across corruption processes, and show that standard diffusion, consistency, and distillation-like objectives are special cases or approximations of it.
    \item \textit{State-of-the-art text generation, without a teacher.} On standard benchmarks, CDLM outperforms base DLMs and matches or surpasses optimized multi-stage distilled models across sampling budgets, despite being trained in a single stage from scratch. A distilled variant pushes performance further, achieving up to $32\times$ speedups over autoregressive baselines.
\end{enumerate}

CDLM reframes fast discrete diffusion as the problem of learning a \textit{path-independent denoiser}, and shows that doing so gives a single-stage, teacher-free model that already operates at the frontier of discrete diffusion acceleration.
We hope this perspective serves as a principled foundation for scalable, high-fidelity discrete generative modeling.

%% file: sections/setup.tex
\section{Problem Setup}
We ground our framework in the standard formalism of discrete diffusion \citep{austin2021structured}, which defines a forward-time corruption process that gradually transforms data into noise, together with a parameterized reverse process that reconstructs data from noise. 
In discrete domains such as text, states are sequences of categorical variables. 
To avoid ambiguity between sequence-level and token-level operations, let $\vx \in \mathcal{V}^L$ denote a sequence of length $L$ over a vocabulary $\mathcal{V}$. The state at time $t$, $\vx_t = (x_t^1, \dots, x_t^L)$, consists of discrete tokens $x_t^i \in \mathcal{V}$.
The forward process is a non-homogeneous Markov chain that factorizes independently over token positions. For a single token position, the transition is governed by matrices $\mQ_t \in \mathbb{R}^{|\mathcal{V}|\times |\mathcal{V}|}$:
\begin{align*}
    q(\vx_t \mid \vx_0) = \prod_{i=1}^L q(x_t^i \mid x_0^i) = \prod_{i=1}^L \mathrm{Cat}(x_t^i; \; x_0^i \mQ_{1:t}),
\end{align*}
with $\mQ_{a:b} \coloneqq \prod_{s=a}^b \mQ_s$ where $\vx_0^i$ denotes the one-hot row vector of the token at position $i$.

Each $\mQ_t$ is row-stochastic to conserve probability mass. Additionally, rows of $\mQ_{1:t}$ must converge to a known stationary distribution over time, ensuring that $q(\vx_t)$ approaches a tractable prior over time. 
Using $\langle \cdot,\cdot\rangle$ for Euclidean inner product and $\odot$ for elementwise product, the exact posterior at time $t-1$ 
for a single token $x^i$ is written in closed form:
\begin{align*}
    q(x_{t-1}^i \mid x_t^i, x_0^i)
    = \mathrm{Cat}\!\left(x_{t-1}^i; 
    \frac{x_t^i \mQ_t^\top \odot \, x_0^i \mQ_{1:t-1}}
    {\langle x_0^i \mQ_{1:t}, x_t^i\rangle}\right).
\end{align*}
A particularly important instance for language is \textit{masked (or absorbing-state) diffusion}, where the stationary distribution places all probability on a special \texttt{[MASK]} token. Not only is masking found to be the most effective corruption \citep{austin2021structured,sahoo2024simple,shi2024simplified,nie2025scaling}, but it also helps simplify the closed-form marginals and posteriors.
Masked diffusion language models exploit this to parameterize $\vx_0$ directly, enabling efficient sampling.

\paragraph{Continuous diffusion and a natural notion of consistency.}
In continuous domains, a standard construction of the forward process is a variance-exploding (VE) stochastic differential equation (SDE):
\begin{align}
d\vx_t = g(t)\, d\vw_t, 
\qquad t\in[0,1], \qquad \vx_{0}\sim p_{\mathrm{data}}
\label{eq:forward-sde}
\end{align}
where $\vw_t$ is a Wiener process, $g(t)\ge 0$ is a noise schedule, and $\pi$ is a tractable stationary prior (commonly Gaussian).
Equivalently, this forward diffusion implies an additive-noise perturbation view
$\vx_t = \vx_0 + \sigma(t)\,\epsilon$, with $\epsilon\sim\mathcal{N}(0,\mI)$,
$\sigma^2(t) = \int_0^t g(u)^2\,du$ and $\frac{d}{dt}\sigma^2(t)=g(t)^2$.
Let $p_t(\vx)$ denote the density of $\vx_t$.
The corresponding reverse-time generative dynamics can be written as a reverse SDE with drift given by the score $\nabla_{\vx}\log p_t(\vx)$.
An equivalent deterministic formulation is given by the \textit{probability-flow ODE}:
\begin{equation*}
    \frac{d\vx_t}{dt}
    = -\tfrac{1}{2} g(t)^2 \nabla_{\vx_t}\log p_t(\vx_t)
    = -\dot{\sigma}(t)\sigma(t)\nabla_{\vx_t}\log p_t(\vx_t)
\label{eq:pf-ode}
\end{equation*}
This ODE defines a \emph{single deterministic trajectory} (given an initial noise sample) that transports probability mass from $p_1\approx\pi$ back to $p_0=p_{\mathrm{data}}$.
The PF-ODE thus ties all noise levels together, and one can enforce consistency along the path by matching predictions for all points on that path. This notion of ``single-path'' consistency has been found promising to developing powerful models for few-step generation in continuous domain \citep{song2023consistency,song2024improved}, although such training can be practically challenging \citep{geng2025consistency}.

\paragraph{Need for a new consistency formulation in discrete space.}
In discrete diffusion, different corruption levels \textit{do not} lie on a unique trajectory. There is no PF-ODE, and hence no canonical map $\vx_t\!\mapsto\!\vx_s$.
This absence has been the primary obstacle to developing a principled consistency framework for discrete data.
Our work introduces a conceptual shift: instead of searching for a non-existent deterministic path, we leverage the rich web of \textit{stochastic} paths.
Our key observation is that the discrete diffusion framework \citep{austin2021structured} provides an analytic family of such paths connecting any two noise levels, which is a powerful yet overlooked property of these diffusion processes.
CDLM replaces the missing PF-ODE with these exact posterior bridges and enforces \textit{multi-path discrete consistency}; predictions must agree across many valid routes in expectation, so short and long denoising paths can become interchangeable.

%% file: sections/method.tex
\section{Method}
\label{sec:method}

We present Consistent Diffusion Language Models (CDLM), a discrete generative framework built on the principle of Multi-Path Discrete Consistency (MPDC). 
Prior consistency methods in continuous domains rely on the PF-ODE to define a unique, deterministic trajectory connecting noise to data. In discrete space, instead of attempting to discretize a non-existent trajectory, we embrace the stochastic nature of discrete diffusion. We observe that the discrete framework defines a rich family of stochastic paths connecting any two noise levels via exact posterior bridges.

We generalize consistency to this stochastic regime: we train a time-conditional predictor to be \textit{path-independent in expectation}. This means that a direct prediction from a highly corrupted state $\vx_t$ must agree (in expectation) with the prediction made after taking an intermediate ``hop'' to $\vx_s$ via any valid stochastic bridge. By enforcing this consistency, CDLM learns to decompose the difficult mapping $\vx_t \to \vx_0$ into arbitrary sub-problems, enabling high-quality generation in few steps as an emergent property of training.

\subsection{Learning a Path-Independent Denoiser}
\label{sec:learning_denoiser}

We leverage the analytic reversibility of the discrete process between \textit{arbitrary} timesteps to learn a consistent denoiser.

\begin{lemma}[General Posterior Bridge; Adapted from \citet{austin2021structured}]
\label{lem:bridge}
For any $0 \le s < t$, the analytic posterior bridge for a single token position is given by:
\begin{align}
    q(x_s^i \mid x_t^i, x_0^i)
    \;=\;
    \mathrm{Cat}\!\left(x_s^i;\;
    \frac{\;(x_0^i \mQ_{1:s}) \odot (x_t^i \mQ_{s+1:t}^\top)\;}
    {\;\langle x_0^i \mQ_{1:t},\, x_t^i\rangle\;}\right)
\label{eq:general-bridge}
\end{align}
The sequence-level bridge is the product over all positions: $q(\vx_s \mid \vx_t, \vx_0) = \prod_i q(x_s^i \mid x_t^i, x_0^i)$.
Furthermore, these bridges compose transitively, obeying a semigroup (Chapman--Kolmogorov) property: for any $0 \le u < s < t$, $q(x_u^i \mid x_t^i, x_0^i) = \sum_{x_s^i} q(x_u^i \mid x_s^i, x_0^i)\,q(x_s^i \mid x_t^i, x_0^i)$, i.e., traversing the two-leg bridge $t \to s \to u$ and marginalizing $x_s^i$ recovers the direct bridge $t \to u$.
\end{lemma}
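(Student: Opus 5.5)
The plan is to derive the bridge from Bayes' rule together with the Markov property of the forward chain, then get the sequence-level factorization from per-position independence, and finally obtain the semigroup identity from the Markov property applied once more. For a single position, $q(x_s^i \mid x_t^i, x_0^i) = q(x_t^i \mid x_s^i, x_0^i)\,q(x_s^i \mid x_0^i)/q(x_t^i \mid x_0^i)$. Because the forward process is Markov, $q(x_t^i \mid x_s^i, x_0^i) = q(x_t^i \mid x_s^i) = x_s^i \mQ_{s+1:t} (x_t^i)^\top$, the $(x_s^i, x_t^i)$ entry of $\mQ_{s+1:t}$.

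Viewed as a function of the one-hot $x_s^i$ with $x_t^i$ fixed, this quantity is the $x_s^i$-coordinate of the row vector $x_t^i \mQ_{s+1:t}^\top$. Likewise, $q(x_s^i \mid x_0^i)$ is the $x_s^i$-coordinate of $x_0^i \mQ_{1:s}$. The numerator over all values of $x_s^i$ is therefore the elementwise product $(x_0^i \mQ_{1:s}) \odot (x_t^i \mQ_{s+1:t}^\top)$. The denominator is $q(x_t^i \mid x_0^i) = \langle x_0^i \mQ_{1:t}, x_t^i\rangle$, using $\mQ_{1:t} = \mQ_{1:s}\mQ_{s+1:t}$. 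As a consistency check, summing the numerator over $x_s^i$ gives $x_0^i \mQ_{1:s}\mQ_{s+1:t}(x_t^i)^\top$, which equals the denominator, so the vector is a valid categorical. Setting $s=t-1$ recovers the one-step posterior from the setup. I would state the harmless convention $\mQ_{1:0}=\mI$ so that the case $s=0$ is covered, and assume $q(x_t^i \mid x_0^i)>0$ so the posterior is defined.

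The sequence-level factorization comes from the fact that the forward kernels factorize across positions: the joint law of $(\vx_0,\vx_s,\vx_t)$ given $\vx_0$ is a product over $i$. Conditioning a product measure on per-coordinate events then yields the product of the per-coordinate conditionals.

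For the semigroup property with $u<s<t$, I would expand $\sum_{x_s} q(x_u\mid x_s,x_0)\,q(x_s\mid x_t,x_0)$. The key step, and the only real subtlety, is to justify $q(x_u \mid x_s, x_0) = q(x_u \mid x_s, x_t, x_0)$. This holds because, given $x_s$, the future $x_t$ is independent of $(x_u,x_0)$ under the forward Markov chain. With that in hand, the sum becomes $\sum_{x_s} q(x_u, x_s \mid x_t, x_0) = q(x_u \mid x_t, x_0)$ by marginalization. Alternatively, one could verify the identity algebraically from the closed form, since the $x_s$-sum telescopes through $\mQ_{u+1:s}\mQ_{s+1:t}=\mQ_{u+1:t}$, but the probabilistic argument is cleaner and I would present that one.
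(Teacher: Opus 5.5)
Your proposal is correct and follows essentially the same route as the paper: Bayes' rule with the Markov property $q(x_t^i\mid x_s^i,x_0^i)=q(x_t^i\mid x_s^i)$ gives the Hadamard-product form with normalizer $\langle x_0^i\mQ_{1:t},x_t^i\rangle$, and the law of total probability with the conditional independence $q(x_u^i\mid x_s^i,x_t^i,x_0^i)=q(x_u^i\mid x_s^i,x_0^i)$ gives the semigroup identity. The paper leaves implicit the normalization check, the $\mQ_{1:0}=\mI$ convention for $s=0$, and the explicit justification of the sequence-level product; your proposal supplies these small but useful details.
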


Using the bridge operator, we can define what it means for a denoising function to be consistent across different paths. We seek to learn a function $f_\theta(\vx_t, t)$ that predicts clean data from any noisy input $\vx_t$, factorized over positions as $f_\theta(\vx_t,t) = \prod_i f_\theta(\vx_t,t)_i$ with $f_\theta(\vx_t,t)_i \in \Delta^{|\mathcal{V}|}$. By marginalizing out the unobservable true data $\vx_0$, we define consistency strictly over the true unconditional reverse transition $p(\vx_s \mid \vx_t) = \mathbb{E}_{\vx_0 \sim p(\vx_0 \mid \vx_t)}[q(\vx_s \mid \vx_t, \vx_0)]$.

\begin{definition}[Multi-Path Discrete Consistency Operator]
Let $g(\cdot,\cdot)_i: \mathcal{V}^L \times [0,1] \to \Delta^{|\mathcal{V}|}$ be a per-position time-conditional predictor. The multi-path discrete consistency operator, $\mathcal{C}_{s \leftarrow t}$, transforms this function as follows:
\begin{align}
    \big[\mathcal{C}_{s \leftarrow t} g\big](\vx_t, t)_i \coloneqq \mathbb{E}_{\vx_s \sim p(\vx_s \mid \vx_t)}\big[g(\vx_s, s)_i\big].
    \label{eq:consistency_operator}
\end{align}
This operator returns the expected per-position prediction of $g$ at time $s$, after transitioning from time $t$ via the exact unconditional reverse chain.
\end{definition}

The ideal denoising function would be a fixed point of this operator for all possible timesteps, anchored by the true data at the boundary.

\begin{definition}[Global Multi-Path Consistency]
A function $f^\star$ is globally multi-path-consistent if it satisfies the boundary condition $f^\star(\vx_0, 0) = \vx_0$ and, for all $0 < s < t \leq 1$, it is a fixed point of the expected consistency operator:
\begin{align}
    f^\star(\vx_t, t) \;=\; \big[\mathcal{C}_{s \leftarrow t} f^\star\big](\vx_t, t).
    \label{eq:global_consistency}
\end{align}
\end{definition}

The corresponding population objective, the \emph{expected consistency loss}, measures the discrepancy between a candidate predictor and its image under $\mathcal{C}_{s \leftarrow t}$:
\begin{align}
    \mathcal{L}_{\text{cons}}(f) \;\coloneqq\; \mathbb{E}_{t,s,\vx_t}\!\Big[\,\mathbb{D}\!\left(f(\vx_t,t)\,\big\|\,[\mathcal{C}_{s \leftarrow t} f](\vx_t,t)\right)\Big],
    \label{eq:expected_consistency_loss}
\end{align}
where the expectation is over $0 < s < t \le 1$ and $\vx_t \sim p_t$, and $\mathbb{D}$ is a strictly proper divergence.

\begin{proposition}[Bayes fixed point]
\label{prop:optimal_predictor}
Within the factorized class, let $f^*(\vx_t, t)_i := p(x_0^i \mid \vx_t)$ denote the per-position posterior marginal over clean tokens. Then $f^*$ is a fixed point of the multi-path consistency operator: $f^*(\vx_t,t) = [\mathcal{C}_{s \leftarrow t} f^*](\vx_t,t)$ for all $0 < s < t \le 1$. Moreover, if the boundary edge $s=0$ is included---equivalently, if a positive-weight max-step diffusion anchor is added under a strictly proper mean-eliciting scoring rule $\mathbb{D}$ (e.g., forward KL / cross-entropy)---then $f^*$ is the unique population minimizer within the factorized class.
\end{proposition}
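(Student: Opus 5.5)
The plan has two parts: a martingale (tower-property) argument for the fixed-point claim, and a propriety argument for uniqueness. Throughout, I use the forward Markov structure: conditional on $\vx_0$, the sequence $\vx_0 \to \vx_s \to \vx_t$ is a Markov chain for $s<t$. Under the joint law $p(\vx_0)q(\vx_s\mid\vx_0)q(\vx_t\mid\vx_s)$ this gives the reverse factorization
\begin{align*}
p(\vx_0,\vx_s\mid\vx_t) = p(\vx_0\mid\vx_t)\,q(\vx_s\mid\vx_t,\vx_0),
\end{align*}
with $q(\vx_s\mid\vx_t,\vx_0)$ the bridge of Lemma~\ref{lem:bridge}. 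It also gives the conditional independence $\vx_0 \perp \vx_t \mid \vx_s$, so that $p(\vx_0\mid\vx_s,\vx_t)=p(\vx_0\mid\vx_s)$.

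\textbf{Fixed point.} I would compute $[\mathcal{C}_{s\leftarrow t}f^*](\vx_t,t)_i$ directly, expanding $p(\vx_s\mid\vx_t)=\sum_{\vx_0}p(\vx_0\mid\vx_t)q(\vx_s\mid\vx_t,\vx_0)$:
\begin{align*}
[\mathcal{C}_{s\leftarrow t}f^*](\vx_t,t)_i
&= \sum_{\vx_s} p(\vx_s\mid\vx_t)\, p(x_0^i\mid\vx_s)
 = \sum_{\vx_s} p(\vx_s\mid\vx_t)\, p(x_0^i\mid\vx_s,\vx_t) \\
&= p(x_0^i\mid\vx_t) = f^*(\vx_t,t)_i .
\end{align*}
The second equality uses the conditional independence above, and the third marginalizes over $\vx_s$. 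This is exactly the tower property: $f^*(\vx_t,t)_i=\mathbb{E}[\mathbf{1}\{x_0^i=\cdot\}\mid\vx_t]$ is a reverse-time martingale. Because both the operator and $f^*$ are defined per position, factorization causes no trouble here. The boundary condition $f^*(\vx_0,0)=\vx_0$ holds trivially.

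\textbf{Uniqueness.} With the anchor $s=0$ included, the operator becomes $[\mathcal{C}_{0\leftarrow t}f](\vx_t,t)_i=\mathbb{E}[\,\mathrm{onehot}(x_0^i)\mid\vx_t]$, by the boundary condition. Equivalently, the anchor term is $\mathbb{E}_{t,\vx_0,\vx_t}[\mathbb{D}(f(\vx_t,t)_i \,\|\, \mathrm{onehot}(x_0^i))]$, i.e.\ the MDLM cross-entropy. For a strictly proper mean-eliciting scoring rule, the pointwise expected score $\mathbb{E}_{x_0^i\mid\vx_t}[S(f_i,x_0^i)]$ is uniquely minimized at $f_i=p(x_0^i\mid\vx_t)$. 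For cross-entropy this follows from Gibbs' inequality, with a gap of exactly $\mathrm{KL}(p(\cdot\mid\vx_t)\,\|\,f_i)$. It follows that $f^*$ uniquely minimizes the anchor term $p_t$-a.e.; note that the per-position argument never needs joint factorization. For the remaining consistency terms with $s>0$, I would observe that they are nonnegative and, by the fixed-point part, vanish at $f^*$. Hence $f^*$ attains the infimum of each summand simultaneously. Any other minimizer must then also minimize the anchor term with a positive weight, so it equals $f^*$ almost everywhere under the time and state distributions.

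\textbf{Main obstacle.} The delicate step is making ``unique'' precise when the consistency term uses the model itself as target. As written, $\mathbb{D}(f\,\|\,\mathcal{C}f)$ has $f$-dependence on both sides, so minimizing the anchor alone does not immediately settle things. The argument above sidesteps this: the consistency terms are bounded below by $0$ and that bound is achieved at $f^*$. I would state uniqueness only up to $p_t$-null sets and only for $t$ in the support of the anchor's time weighting, treating the target as the exact operator image, not a stop-gradient copy. For a strictly proper $\mathbb{D}$, a zero consistency loss alone still admits non-Bayes fixed points, for example constant predictors. This is precisely why the positive-weight anchor is needed, and I would mention that example as the reason the hypothesis cannot be dropped.
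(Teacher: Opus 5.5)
Your fixed-point argument is the same as the paper's. The paper's step $\sum_{\vx_s} p(x_0^i=v,\vx_s\mid\vx_t)=\sum_{\vx_s}p(\vx_s\mid\vx_t)\,p(x_0^i=v\mid\vx_s)$ silently uses the conditional independence $\vx_0\perp\vx_t\mid\vx_s$ that you state explicitly.

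Your uniqueness argument takes a genuinely different route. The paper fixes $(\vx_t,t,i)$ and freezes the cleaner-side target at the Bayes predictor, $P_i=f^*(\vx_s,s)_i$ with $\vx_s\sim p(\cdot\mid\vx_t)$. It then invokes the mean-eliciting property: the unique minimizer of $\mathbb{E}[\mathbb{D}(P_i\,\|\,Q_i)\mid\vx_t]$ is $\mathbb{E}[P_i\mid\vx_t]$, which equals $f^*(\vx_t,t)_i$ by the fixed-point part. The anchor's role in excluding degenerate (e.g.\ constant-in-$t$) fixed points is left to a separate, informal remark. You instead take the paper's population loss $\mathcal{L}_{\text{cons}}$ literally, as a function of $f$ with the exact image $\mathcal{C}_{s\leftarrow t}f$ inside $\mathbb{D}$. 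You note that it is nonnegative and vanishes at $f^*$, and let strict propriety of the anchor alone identify the minimizer.

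Your route buys three things. It actually proves ``unique global minimizer'' for an objective in which $f$ appears on both sides. It makes the anchor's identifying role a formal step rather than a remark. And on the consistency term it needs only $\mathbb{D}\ge 0$ with equality on the diagonal, so mean-elicitation is never used.

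The paper's route buys coverage of the Monte Carlo, stop-gradient form that is actually trained, where the expectation over $\vx_s$ sits outside $\mathbb{D}$. There the consistency term does \emph{not} vanish at $f^*$, because of the irreducible bridge variance described in the appendix remark. So your step ``every summand attains its infimum at $f^*$'' would not apply to that form, and you were right to scope your claim to the exact-image objective. In that setting, mean-elicitation is exactly what makes $f^*$ the best response to its own sampled target. However, this only establishes $f^*$ as a self-consistent point of the bootstrapped update, not as the unique minimizer of a jointly $f$-dependent loss.
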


This condition formalizes path-invariance: predicting from $\vx_t$ directly is equivalent to first transitioning to \emph{any} intermediate state $\vx_s$ via the true reverse chain and predicting from there. The unanchored self-consistency loss is not by itself identifying, and degenerate path-invariant predictors (e.g., constant-in-$t$ functions matching the boundary on a measure-zero set) can also be fixed points. This is also why CDLM includes the max-step diffusion anchor in Eq.~\ref{eq:cdlm_final}, which grounds the consistency equations at the data boundary and selects the Bayes posterior marginal. Moreover, the optimum within the factorized class is the per-token posterior marginal, not the joint $p(\vx_0 \mid \vx_t)$. CDLM therefore does not aim to overcome the token-factorization barrier of one-step discrete generation from maximally corrupted noise, and targets the few-step regime, where multiple refinement passes progressively resolve cross-token structure.

\paragraph{Training.}
We train a model $f_\theta$ to satisfy the global consistency property by minimizing the discrepancy between the two sides of Eq.~\ref{eq:global_consistency} over a random selection of timesteps and data. 
Enforcing local consistency across edges rigorously bounds global path-independence (see Appendix \ref{subsec:app_local_to_global_consistency}).
For $\delta = t - s$, the CDLM objective $\mathcal{L}_{\text{CDLM}}(\theta)$ evaluates:
\begin{align}
    \mathbb{E}_{\vx_0, t, \delta, \vx_t} 
    % \mathbb{E}_{\vx_s \sim q(\cdot \mid \vx_t, \vx_0)}
    \left[ \sum_{i \in \mathcal{M}(\vx_t)} w(t, \delta) \cdot \mathbb{D}\left( f_{\theta}(\vx_t, t)_i \,\middle|\middle|\, f_{\tilde{\theta}}(\vx_s, s)_i\right)\right],
    \label{eq:cdlm_objective}
\end{align}
where $\vx_s$ is sampled from the \textit{oracle} bridge $q(\vx_s \mid \vx_t, \vx_0)$ using ground-truth $\vx_0$; this is a standard Monte Carlo estimator of the unconditional consistency target $p(\vx_s \mid \vx_t)$ in Eq.~\ref{eq:consistency_operator}.
The summation index set $\mathcal{M}(\vx_t)\subseteq\{1,\dots,L\}$ specifies which token positions contribute to the loss, which allows a unified formulation across different discrete corruption kernels.
For absorbing-state (masking) diffusion, we set $\mathcal{M}(\vx_t)=\{i: x_t^i=\texttt{[MASK]}\}$.
For non-absorbing priors such as uniform diffusion, we simply use all positions, $\mathcal{M}(\vx_t)=\{1,\dots,L\}$.
Here, $f_{\tilde{\theta}}$ denotes a target network whose parameters $\tilde{\theta}$ are a variant of $\theta$ (e.g., a slow-moving exponential average) to stabilize training. The term $\mathbb{D}$ is a divergence measure, and $w(t,\delta)$ is a positive weighting function.
Precise algorithmic formulations for both a general training recipe (Consistent Discrete Denoising Diffusion Training; CD3T) and a concrete instantiation within masked diffusion context (M-CDLM) are deferred to Appendix \ref{sec:app_alg}.

\paragraph{Sampling.}
A trained CDLM is a time-conditional denoiser, analogous to a standard MDLM, which allows it to leverage existing sampling. We use ancestral sampling, where given a sequence $\vx_t$, we first predict its clean version $\hat{\vx}_0 = f_\theta(\vx_t, t)$ and then use the posterior bridge $q(\vx_s \mid \vx_t, \hat{\vx}_0)$ to sample the next state $\vx_s$ \citep{austin2021structured,sahoo2024simple}.
The number of steps in this iterative process is a flexible hyperparameter at inference time and is decoupled from the training formulation.
Note that CDLM's novelty does \textit{not} lie in devising new samplers, but in training a model that remains robust under any schedule of steps, although compatibility with existing samplers helps with fair comparison and adoption.

\subsection{Design Insights for Stable and Scalable Training}
While the default CDLM objective in Eq. \ref{eq:cdlm_objective} suffices for simple settings, such as the 2D moons data in Fig. \ref{fig:toy_teaser}, the MPDC objective is self-referential, creating an optimization landscape where it takes very long to converge or leads to degenerate solutions.
In particular, naive optimization could lead to \textit{mode collapse}, where the outputs become overly repetitive and deterministic to trivially satisfy consistency, or \textit{uniform drift}, where predictions degrade towards uninformative distributions that are easy to `match'. We introduce three principled design choices that stabilize training and scale CDLM effectively in practice.

\paragraph{Step size as a multi-task curriculum.} In CDLM, the step size $\delta=t-s$ determines how far we ``jump" along a denoising route.
Through linearity of expectation, we can view CDLM training as \textit{multi–task learning over step sizes}: each $\delta$ specifies a distinct path–equivalence constraint. This perspective makes two design questions explicit: (i) \textit{which} step sizes should be practiced (the step size scheduler $p(\delta)$), and (ii) \textit{how} to weight them (the weighting scheduler $w(\delta)$).
We sample $\delta$ within a practical range 
(e.g., $1/8$–$3/8$), which directly targets the few-step regime.
Moreover, we select $w(\delta)=\frac{1}{\delta}$ to help with \textit{path length normalization}.
If $\delta$ is sampled uniformly, longer segments are rarer but cover more `time volume,' while shorter segments are frequent but cover less. Our choice of $\frac{1}{\delta}$ ensures that, in expectation, each unit of `corruption time' contributes equally to the total loss, balancing the learning signal across short and long paths
In practice this choice prevents the training signal from being dominated by easy, local constraints while still supplying dense supervision where it is most stable.

\paragraph{A diffusion anchor via max-step scheduler.}
The self-referential nature of the CDLM loss can be stabilized by grounding it with the true data distribution. We mix in a small fraction of ``max-step'' tasks where $\delta=t$ (so $s=0$). The corresponding loss $\mathcal{L}_{\text{final}}(\theta)$ then becomes:
\begin{align}
(1-\kappa_{\text{ms}})\,\mathcal{L}_{\text{CDLM}}(\theta)
+ \kappa_{\text{ms}}\,\mathbb{E}_{t,\vx_0,\vx_t}\left[\,\frac{1}{t}\,\mathbb{D}(f_\theta(\vx_t,t)\,\|\,\vx_0)\right],
\label{eq:cdlm_final}
\end{align}
which recovers the standard diffusion objective as a regularizer. 
In practice, a small $\kappa_{\text{ms}}\!\in\![0.1,0.4]$ suffices to ground learning and discourage low–entropy ``shortcut" solutions.
Moreover, we find this regularization is most critical in early stages of training and its weight can be annealed over time.

\paragraph{Optimization asymmetry and choice of divergence.}
To prevent the model from collapsing by perfectly matching its own (potentially flawed) predictions, we introduce an optimization asymmetry. This is implemented using a stop-gradient on the target network, whose parameters are a slow-moving exponential average (EMA) of the online model \citep{grill2020bootstrap}. Furthermore, to balance the mode-seeking and mode-covering tendencies of forward and reverse KL-divergence, which can exacerbate collapse and drift respectively, we use the symmetric and bounded Jensen-Shannon Divergence, which 
% empirically 
provides more stable gradient signal when training from scratch.

\subsection{A Unifying View of Discrete Generative Modeling}
\label{subsec:unifying_view}
The MPDC principle not only enables efficient generation but also provides a general lens through which to understand and connect a range of modern generative models. We now show that the canonical objectives for masked diffusion, consistency models, and other acceleration techniques emerge as specific instantiations of the CDLM framework. More rigorous equivalences are shared in Appendix~\ref{app:theoretical_unification}.

\paragraph{Masked Diffusion Models \citep{sahoo2024simple} as the max-step boundary.}
Under the maximum step size $\delta = t$, the target collapses to the deterministic clean data. Configuring CDLM with Forward KL and the continuous-time weight $w(t)=-\frac{\alpha_t'}{1-\alpha_t}=1/t=1/\delta$ mathematically recovers the exact continuous-time NELBO optimized by MDLM.

\paragraph{Consistency Models \citep{song2023consistency} as the deterministic continuous analog.}
Continuous Consistency Models enforce local self-consistency across adjacent steps by relying on an ODE solver to deterministically couple points. CDLM generalizes this to discrete Markov spaces by replacing the non-existent deterministic ODE step with an expectation over the exact stochastic posterior bridge.

\paragraph{Analytical bypass of empirical bootstrapping (Progressive Distillation \citep{salimans2022progressive} and Shortcut Models \citep{frans2025one}).}
Methods like Progressive Distillation (PD) and Shortcut Models accelerate continuous generation by training a model to match a multi-step trajectory using empirical bootstrapping. Because deterministic PF-ODE solvers lack analytic integrals for finite step sizes, these methods must explicitly instantiate and sum over intermediate states via neural rollouts. However, in discrete space, the true diffusion transition matrices inherently obey the Chapman-Kolmogorov equation. The exact posterior bridge analytically marginalizes over the intermediate state in closed form. 
Training CDLM over step size $\delta$ therefore evaluates the exact marginalized multi-step bridge in one stage, aligning with the \emph{structural} multi-step constraints targeted by PD and Shortcut Models, but without neural rollouts or recursive bootstrapping.

\paragraph{Discrete distillation as approximate bridge implementations.}
CDLM is a \textit{single-stage} framework that derives supervision directly from the \textit{exact analytic} bridge. Recent \textit{two-stage} distillation methods formally operate by substituting the exact bridge with \textit{approximate empirical couplings}:
\begin{itemize}
    \item \textbf{Self-Distillation Through Time (SDTT)} \citep{deschenaux2025beyond} constructs targets via multi-step teacher rollouts. This effectively substitutes the oracle data $\vx_0$ in the analytic bridge with an empirical prediction $\hat{\vx}_0 \sim p_{\text{teacher}}(\cdot \mid \vx_t)$. CDLM can thus be viewed as the oracle-teacher limit of SDTT.
    \item \textbf{DUO-DCD} \citep{sahoo2025the} exploits ``diffusion duality'' to map continuous ODE states to the discrete domain via an \texttt{argmax} projection. By sharing continuous Gaussian noise $\epsilon$ across timesteps to form a Deterministic Discrete Trajectory, it constructs a highly correlated \textit{comonotonic pseudo-bridge} that differs structurally from the conditionally independent categorical posterior bridge used by CDLM. We hypothesize that this difference contributes to the lower-entropy behavior empirically observed for greedy DUO-DCD samples, whereas CDLM's exact bridge preserves diversity by construction.
\end{itemize}

%% file: sections/experiments.tex
\section{Experiments}
We evaluate the CDLM framework on both unconditional and conditional text generation. Our experiments are designed to demonstrate that CDLM not only establishes a new state-of-the-art for base, single-stage discrete diffusion models but also rivals or outperforms complex, multi-stage distilled models across various sampling budgets.

\subsection{Related Baselines and Model Categorization} 
We benchmark our framework against the current state-of-the-art in discrete diffusion language modeling: MDLM \cite{sahoo2024simple}, SDTT \cite{deschenaux2025beyond}, and DUO (including DUO-DCD) \cite{sahoo2025the}. We briefly introduce the baseline models here, and defer a more elaborate overview of other related works to Appendix \ref{sec:related_work}.

Conceptually, the landscape of efficient discrete diffusion models is divided into two distinct classifications:
\begin{itemize}
    \item \textbf{Base Models (Trained from Scratch):} Models trained in a single stage using a primary objective. MDLM is a text-based diffusion model with a masked prior trained via the NELBO loss. DUO improves upon the original Uniform Diffusion Language Models (UDLM) \cite{schiff2024discreteguidance} by leveraging a connection to continuous Gaussian distributions through an \texttt{argmax} operation. Like MDLM and DUO, our CDLM is trained purely with consistency loss from scratch.
    \item \textbf{Distilled Models (Multi-Stage):} Models relying on a pre-trained base model as a teacher and requiring single or multiple steps of teacher roll-outs for better generation quality across different sampling steps. SDTT performs self-distillation based on MDLM. DUO-DCD applies consistency distillation over DUO's continuous proxy, finding that a greedy sampler further improves sampling metrics.
\end{itemize}
CDLM belongs to the first category, yet our experiments demonstrate that its native multi-path discrete consistency formulation allows it to rival or surpass multi-stage models.

\subsubsection{Unconditional Generation}
\begin{table*}[t]
\centering
\scriptsize
\resizebox{\textwidth}{!}{%
\begin{tabular}{lccccccccccc}
\toprule
\textbf{Model} 
& \multicolumn{1}{c}{\textbf{Pretrain}} 
& \multicolumn{1}{c}{\textbf{Distill}} 
& \multicolumn{9}{c}{\textbf{Sampling steps with FP64 Sampling}} \\
\cmidrule(lr){4-12}
& \textbf{Steps} & \textbf{Steps} 
& \textbf{4} & \textbf{8} & \textbf{16} & \textbf{32} & \textbf{64} 
& \textbf{128} & \textbf{256} & \textbf{512} & \textbf{1024} \\
\midrule
\multicolumn{12}{c}{\textit{Comparison with Base Models (Trained from Scratch)}} \\
\midrule
AR & 75K & 0 & N/A & N/A & N/A & N/A & N/A & N/A & N/A & N/A & 40.2 (5.6) \\
MDLM & 150k & 0 & 1654.5 (5.8) & 682.7 (5.9) & 297.1 (5.6) & 186.9 (5.8) & 124.4 (5.6) & 129.2 (5.8) & 100.5 (5.7) & 114.0 (5.6) & 97.7 (5.6) \\
Ours: MCDLM & 150k & 0 & 649.4 (5.5) & 246.9 (5.6) & 125.6 (5.4) & 86.5 (5.6) & 67.7 (5.6) & 66.0 (5.5) & 55.4 (5.5) & 58.4 (5.4) & 53.4 (5.5) \\
Ours: MCDLM–PPLOptimized & 150k & 0 & 331.2 (5.0) & 132.1 (5.2) &  \underline{71.6 (5.3)} &  \underline{48.7 (5.3)} &  \underline{40.1 (5.3)} & 38.1 (5.2) &  \underline{32.5 (5.3)} &  \underline{33.5 (5.2)} & 33.8 (5.4) \\
\midrule
\multicolumn{12}{c}{\textit{Comparison with Distilled Models}} \\
\midrule
MDLM - SDTT & 100k & 50k & 369.6 (5.3) & 134.0 (5.3) & 76.0 (5.4) & 51.4 (5.6) & \underline{40.1 (5.3)} &  \underline{36.2 (5.4)} &  \underline{32.5 (5.3)} & 33.8 (5.1) &  \underline{31.2 (5.4)} \\
Ours: MCDLM + SDTT & 100k & 50k & \textbf{242.1 (5.2)} &  \textbf{105.0 (5.3)} & \textbf{57.5 (5.1)} & \textbf{47.0 (5.5)} & \textbf{35.3 (5.3)} & \textbf{30.3 (5.2)} & \textbf{25.8 (5.3)} & \textbf{28.1 (4.9)$^{*}$} & \textbf{27.1 (5.2)} \\
\bottomrule
\end{tabular}
}
\caption{Generative perplexity (with entropy in parentheses) across different models with \textit{Masked Distribution} as prior which we call MCDLM, training setups, and FP64 sampling steps. We use ancestral sampler for all models. Results with best PPLs are \textbf{bolded} and second best are \underline{underlined}. * denotes the entropy is lower than 5 which we found empirically yield repetitive characters. Consistent with MDLM, our AR baseline is trained with half of the steps to ensure the number of total seen tokens are the same during training.}
\label{tab:ppl_results_64}
\end{table*}

\subsection{Experimental Setup}
We present two primary 110M-parameter models trained with a masked source distribution: \textbf{MCDLM} (Masked CDLM) and \textbf{MCDLM-PPLOptimized}. Both models are trained within a single stage for 150K steps using Algorithm \ref{alg:mcdlm} with the multi-scheduler objective from Equation \ref{eq:cdlm_final}. MCDLM-PPLOptimized is a CDLM variant tuned to significantly improve generative perplexity while slightly sacrificing entropy, highlighting our framework's engineerability. We compare our models against both base and distilled baselines on unconditional and conditional generation.

For MCDLM, we set our step-size scheduler $\Delta_T$ to sample randomly in $[\frac{1}{8}, \frac{5}{8}]$, with the max-step regularizer weight $\kappa_{ms}$ set to $0.4$. For MCDLM-PPLOptimized, we use the identical setting for the first 100K steps, before gradually shrinking the maximum range of $\Delta_T$ to $\frac{3}{8}$ and annealing $\kappa_{ms}$ to $0.2$. To stabilize training \cite{sahoo2025the, schiff2024discreteguidance, song2023consistency}, we maintain an Exponential Moving Average ($\lambda=0.999$) for the target network $\bar\theta$ during CDLM training, switching to a hard update every 10K steps starting at 100K steps for the PPLOptimized variant. 

To show the generalizability of our algorithm, we also conduct preliminary experiments using a model trained with a Uniform Distribution prior (\textbf{UCDLM}), while keeping the same data setup. For training, we use a linearly increasing step-size scheduler from $0.125$ to $0.375$. The rest of the configuration is kept aligned with MCDLM.

Consistent with our models, we train all compared baselines at the 110M parameter scale for 150K steps with a batch size of 2048 using OpenWebText \cite{owt} pretraining data. MDLM was trained with the NELBO objective for 150K steps, and SDTT undergoes a pretraining stage of 100K steps using MDLM's objective before shifting to distillation with 2 teacher updates per step for 50K steps. For DUO, similar to \citet{sahoo2025the}, we always use half of the steps for curriculum learning and half of the steps for continual finetuning. DUO+DCD leverages the DUO trained with 100K steps and then uses an additional 50K steps for distillation with an updating teacher and doubling the step size $\delta$ for every 10K rounds.

\subsection{Results and Analysis}

\begin{table*}[t]
\centering
\scriptsize
\resizebox{\textwidth}{!}{%
\begin{tabular}{lccccccccccc}
\toprule
\textbf{Model} 
& \multicolumn{1}{c}{\textbf{Pretrain}} 
& \multicolumn{1}{c}{\textbf{Distill}} 
& \multicolumn{9}{c}{\textbf{Sampling steps with FP64 Sampling}} \\
\cmidrule(lr){4-12}
& \textbf{Steps} & \textbf{Steps} 
& \textbf{4} & \textbf{8} & \textbf{16} & \textbf{32} & \textbf{64} 
& \textbf{128} & \textbf{256} & \textbf{512} & \textbf{1024} \\
\midrule
\multicolumn{12}{c}{\textit{Comparison with Base Models (Trained from Scratch)}} \\
\midrule
AR & 75K & 0 & N/A & N/A & N/A & N/A & N/A & N/A & N/A & N/A & 40.2 (5.6) \\
UDLM & 150k & 0 & 516.6 (5.5) & 185.9 (5.7) & 122.4 (5.6) & 93.9 (5.7) & 87.6 (5.6) & 90.5 (5.7) & 78.2 (5.7) & 83.1 (5.4) &  84.0 (5.4) \\
DUO & 150k & 0 & 514.4 (5.6) & 177.3 (5.6) & 123.2 (5.4) & 97.7 (5.4) & 85.1 (5.4) & 83.4 (5.5) & 89.4 (5.5) & 91.2 (5.5) & 85.4 (5.6) \\
Ours: UCDLM & 150k & 0 & 377.3 (5.2) & 156.9 (5.7) & 104.7 (5.5) & 85.5 (5.5) & 81.0 (5.5) & 77.6 (5.6) & 74.6 (5.5) & 71.3 (5.4) & 71.7 (5.3) \\
Ours: UCDLM (greedy) & 150k & 0 & \textbf{110.4 (4.8)$^{*}$} & \textbf{89.6 (5.6)} & \textbf{74.4 (5.5)} &\textbf{ 65.7 (5.5)} & \textbf{64.7 (5.5)} & \textbf{62.1 (5.6)} & \textbf{58.9 (5.4)} & \textbf{60.3 (5.6)} &  \textbf{57.0 (5.2)}\\
\midrule
\multicolumn{12}{c}{\textit{Comparison with Distilled Models}} \\
\midrule
DUO + DCD & 100k & 50k & 408.3 (5.6) & 166.9 (5.6) & 118.2 (5.4) & 91.8 (5.5) & 80.2 (5.5) & 79.4 (5.5) & 77.9 (5.6) & 85.8 (5.6) & 75.6 (5.5) \\
DUO + DCD (greedy) & 100k & 50k & 118.4 (3.9)$^{*}$ & 109.2 (5.1) & 79.8 (5.1) & 70.5 (5.2) & 65.5 (5.4) & 64.3 (5.3) & 62.6 (5.2) & 67.3 (5.3) & 58.5 (5.1) \\
\bottomrule
\end{tabular}%
}
\caption{Generative perplexity (with entropy in parentheses) across different models with \textit{Uniform Distribution} as prior which we call UCDLM, and FP64 sampling steps. We use ancestral sampler for all models except DUO + DCD (greedy), which uses greedy sampler described as in \citet{sahoo2025the}. Results with best PPLs are \textbf{bolded}. * denotes the entropy is lower than 5 which we found empirically yield repetitive characters. Consistent with MDLM, our AR baseline is trained with half of the steps to ensure the number of total seen tokens are the same during training.}
\label{tab:ppl_results_64_uni}
\end{table*}

\begin{table*}[t]
\centering
\scriptsize
\resizebox{\textwidth}{!}{
\begin{tabular}{l*{15}{c}}
\toprule
{\textbf{Model}} 
& \multicolumn{3}{c}{\textbf{OpenWebText}} 
& \multicolumn{3}{c}{\textbf{Lambada}} 
& \multicolumn{3}{c}{\textbf{Wikitext103}}
& \multicolumn{3}{c}{\textbf{PTB}}\\
\cmidrule(lr){2-4} \cmidrule(lr){5-7} \cmidrule(lr){8-10} \cmidrule(lr){11-13}
& \textbf{8 Step} & \textbf{64 Step} & \textbf{512 Step} 
& \textbf{8 Step} & \textbf{64 Step} & \textbf{512 Step} 
& \textbf{8 Step} & \textbf{64 Step} & \textbf{512 Step}
& \textbf{8 Step} & \textbf{64 Step} & \textbf{512 Step}\\
\midrule
MDLM  & 45.9 / 60.9 & 40.4 / 61.1 & 39.7 / 61.1 & 72.3 / 57.6 & 62.9 / 57.7 & 61.7 / 57.9 & 44.8 / 61.9 & 39.6 / 62.2 & 39.0 / 62.2 & 248.2 / 50.5 & 220.0 / 50.7 & 215.8 / 50.6 \\ 
SDTT  & 34.0 / 62.4 & 30.8 / 62.5 & 30.3 / 62.5 & 51.9 / 59.3 & 46.4 / 59.3 & 45.7 / 59.5 & 33.6 / 63.5 & 30.5 / 63.7 & 30.1 / 63.7 & 141.4 / 52.3 & 125.4 / 52.4 & 124.0 / 52.3 \\
Ours: MCDLM–PPLOptimized & \textbf{31.6} / \textbf{62.8} & \textbf{28.6} / \textbf{62.9} & \textbf{27.9} / \textbf{63.1} & \textbf{43.4} / \textbf{60.1} & \textbf{38.8} /\textbf{ 60.1} & \textbf{38.5} / \textbf{60.3} & \textbf{30.0} / \textbf{64.2} & \textbf{27.3} / \textbf{64.4} & \textbf{26.9} / \textbf{64.5} & \textbf{122.1 }/ \textbf{53.2} & \textbf{107.5} / \textbf{53.3} & \textbf{106.8} / \textbf{53.3} \\
\textcolor{gray}{DUO-DCD (Greedy)}  & \textcolor{gray}{40.2 / 26.7} & \textcolor{gray}{32.7 / 26.6} & \textcolor{gray}{32.0 / 26.7} & \textcolor{gray}{27.9 / 31.2} & \textcolor{gray}{31.2 / 59.3} & \textcolor{gray}{21.6 / 31.0} & \textcolor{gray}{44.9 / 32.3} & \textcolor{gray}{35.9 / 32.5} & \textcolor{gray}{34.1 / 32.2} & \textcolor{gray}{62.1 / 19.5} & \textcolor{gray}{45.0 / 19.3} & \textcolor{gray}{41.2 / 19.2}  \\
\bottomrule
\end{tabular}%
}
\caption{Conditional Generation results across different datasets. Perplexity $\downarrow$ / BLEU2 $\uparrow$ results with FP64 sampling using the ancestral sampler are reported. We choose the best performing models from unconditional generation for our comparison. Results for DUO-DCD with greedy sampler are grayed out as it produces nearly random sentences that do not preserve the input conditions, which results in very low BLEU scores.}
\label{tab:fp64_eval}
\end{table*}

We present results for unconditional generation with 1024 tokens in Figure \ref{fig:ppl_fp64_two_panels} and Table \ref{tab:ppl_results_64} for 64-bit sampling, and Table \ref{tab:ppl_results_fp32} for 32-bit sampling. Each model generates 32 samples for PPL evaluation under \texttt{gpt2-large}.

\textbf{Superiority over base models.} Our MCDLM model outperforms MDLM across all sampling steps, regardless of sampling precision. Compared to DUO, MCDLM produces much lower PPLs from step 16 to 1024 under both 32- and 64-bit sampling, while maintaining a similar entropy as DUO under the 64-bit sampler.

\textbf{Rivaling multi-stage distillation.} Remarkably, despite lacking a distillation stage, our single-stage MCDLM-PPLOptimized outperforms multi-stage models like SDTT and DUO-DCD (with both ancestral and greedy samplers) across most sampling steps, while preserving a healthy entropy level. Note that although DUO-DCD with a greedy sampler yields lower PPL at very low sampling steps, it suffers from significantly lower entropy ($\sim 3.9$), which indicates severe mode collapse that produces repetitive characters and inflates the metric. We also distilled MCDLM using the SDTT objective, which yields a model strictly outperforming the original SDTT throughout steps 4 to 1024. Overall, MCDLM-PPLOptimized achieves the best balance between generative PPL and entropy.

\textbf{Speedups and efficiency.} MCDLM-PPLOptimized achieves a \textbf{64$\times$--128$\times$ speedup} over MDLM, and matches the Autoregressive (AR) baseline at 32--64 steps, a \textbf{16$\times$--32$\times$ reduction} in Number of Function Evaluations (NFE). We report NFE rather than wall-clock latency, which depends on implementation details (sequence length, batching, KV-cache support, hardware) that differ between AR and diffusion decoders. System optimizations such as diffusion KV caching and parallel decoding should be complementary to the algorithmic gains reported here.

% \textbf{Speedups and Efficiency.} Speed-wise, MCDLM-PPLOptimized achieves a \textbf{64$\times$--128$\times$ speedup} compared to MDLM. When compared against the Autoregressive (AR) baseline, MCDLM-PPLOptimized achieves similar performance with 32--64 steps, thus offering a \textbf{16x--32x reduction} in terms of Number of Function Evaluations (NFE). We report NFE rather than wall-clock latency because end-to-end speed depends on implementation details---sequence length, batching, KV-cache support, and hardware---which differ substantially between AR and diffusion decoders. Orthogonal systems optimizations such as diffusion KV caching and parallel decoding are complementary to the algorithmic gains reported here.

\textbf{Uniform prior.} Finally, we observe similar success when applying Algorithm \ref{alg:cd3t} to uniform prior DLMs. As shown in Table \ref{tab:ppl_results_64_uni}, our UCDLM model outperforms vanilla UDLM across all sampling steps. Moreover, compared to the state-of-the-art DUO \cite{sahoo2025the} model and its distillation variant, UCDLM yields strictly better perplexity regardless of the sampler choice (ancestral or greedy).
\label{sec:uncond_results}

\subsubsection{Conditional Generation}

We also evaluate our model on conditional generation across four popular datasets, three of which are out-of-distribution (OOD): OpenWebText \cite{owt}, Lambada \cite{lambada}, Wikitext-103 \cite{wikitext}, and PTB \cite{ptb}. We randomly sample 32 sentences of 1024 tokens from each dataset, perturb 50\% of the tokens using the model's prior distribution, and use them as conditions for the model to recover the original sentences. We use the original unperturbed sentences as the reference, with PPL evaluating the fluency of the final generated sentence and BLEU score assessing whether the model successfully preserves the input conditions. We also use MAUVE \cite{mauve} to evaluate embedding space distribution matching, though this metric is over-saturated for our task (detailed in Table \ref{tab:fp64_mauve} in the Appendix). 

Table \ref{tab:fp64_eval} outlines the comparison of CDLM against SDTT and DUO. Because of its uniform distribution formulation which allows tokens to transition arbitrarily into any other tokens during sampling, DUO is not good at preserving the conditions and often yields very low BLEU scores, generating sentences that completely differ from the given condition (grayed out in the table). Conversely, MCDLM-PPLOptimized consistently outperforms SDTT in terms of both generative perplexity and BLEU score, demonstrating its robust advantage in generating plausible, consistent, and fluent sentences under given conditions.
\label{sec:cond_results}

\subsection{Ablations and Insights}

\paragraph{Choice of step size scheduler (Table \ref{tab:ablation_scheduler}).}
Other than the max-step scheduler serving as a diffusion regularizer, we also use a separate scheduler for $t$ and $\delta$ for the CDLM training. Note that training exclusively with the diffusion regularizer reduces our model directly to MDLM. We experimented with four schedulers: random, staged increasing, linear increasing, and linear decreasing. Models trained with a linear increasing scheduler are exposed to small $\delta$ early in training and larger $\delta$ later, making the final checkpoints highly effective at larger sampling steps. Conversely, linear decreasing schedulers optimize the model specifically for few-step generation. 
% For more details, see Table \ref{tab:ablation_scheduler} in the Appendix.

\paragraph{Max-step scheduler and diffusion regularizer (Table \ref{tab:ablation_diffreg}).}
As theoretically motivated in Section \ref{subsec:unifying_view}, anchoring the CDLM objective with the true data boundary $f(\vx_0, 0) = \vx_0$ acts as a principled, unbiased diffusion anchor. Empirically, we confirm this is critical for balancing generation quality and diversity. Without this max-step regularization, the self-referential consistency loss is prone to rapid mode collapse, i.e., producing highly repetitive words with severely collapsed entropy and artificially biased perplexity. As we increase the weight of the diffusion regularizer ($\kappa_{ms}$), both PPL and Entropy increase across all step counts, validating its role as a structural variance balancer between unconstrained diversity and strict path consistency. 
% For more details, see Table \ref{tab:ablation_diffreg} in the Appendix.

\begin{table}[t]
\centering
\scriptsize
\begin{tabular}{lccc}
\toprule
\textbf{Model} & \textbf{8} & \textbf{64} & \textbf{1024} \\
\midrule
CDLM w. random scheduler              & 110.6 / 5.3 & 32.8 / 5.3 & 19.7 / 5.2 \\
CDLM w. staged increasing scheduler   & 160.8 / 5.2 & 45.1 / 5.3 & 25.3 / 5.2 \\
CDLM w. linear increasing scheduler   & 117.6 / 5.5 & 37.9 / 5.4 & 17.9 / 4.9 \\
CDLM w. linear decreasing scheduler   & 112.3 / 5.1 & 32.1 / 5.2 & 20.5 / 5.2 \\
\bottomrule
\end{tabular}
% \caption{Different scheduling strategies.}
\caption{Effect of the CDLM step-size scheduler on generative perplexity and entropy for different sampling steps.}
\label{tab:ablation_scheduler}
\end{table}

\begin{table}[t]
\centering
\scriptsize
\begin{tabular}{lccc}
\toprule
\textbf{Model} & \textbf{8} & \textbf{64} & \textbf{1024} \\
\midrule
CDLM w. no max-step scheduler       & 16.7 / 3.2  & 7.1 / 3.4  & 5.7 / 2.8 \\
CDLM w. 0.4 for max-step scheduler  & 110.6 / 5.3 & 32.8 / 5.3 & 19.7 / 5.2 \\
CDLM w. 1.0 for max-step scheduler  & 274.0 / 5.5 & 75.1 / 5.5 & 38.4 / 5.3 \\
\bottomrule
\end{tabular}
% \caption{Different max-step scheduler weights.}
\caption{Effect of the max-step diffusion-anchor weight $\kappa_{\mathrm{ms}}$ on generation perplexity and entropy for different sampling steps.}
\label{tab:ablation_diffreg}
\end{table}

\begin{table}[t]
\centering
\scriptsize
\begin{tabular}{lccc}
\toprule
\textbf{Model} & \textbf{8} & \textbf{64} & \textbf{1024} \\
\midrule
CDLM w. JS divergence  & 110.6 / 5.3 & 32.8 / 5.3 & 19.7 / 5.2 \\
CDLM w. Forward KL     & 8e4 / 6.9   & 8e4 / 6.9  & 7e4 / 6.8 \\
CDLM w. Backward KL    & 75.3 / 4.6  & 64.0 / 4.4 & 44.2 / 4.3 \\
\bottomrule
\end{tabular}
% \caption{Different divergence objectives.}
\caption{Effect of the consistency divergence objective on generation perplexity and entropy for different sampling steps.}
\label{tab:ablation_js}
\end{table}
\paragraph{Choice of distance metric (Table \ref{tab:ablation_js}).} 
Because discrete diffusion bridges are inherently stochastic, the choice of divergence $\mathbb{D}$ strictly dictates how the model handles path variance (as discussed in Section \ref{subsec:unifying_view}). Our ablations confirm our theoretical claims. 
Forward KL (targeting the exact arithmetic mean) is mathematically unbiased but highly unstable when training from scratch. It struggles to average over the massive stochasticity of the discrete jump process, which manifests as \textit{uniform drift}, leading to catastrophic perplexity degradation.
Backward KL (targeting the geometric mean) is a strong mode-seeker. It improves PPL but aggressively penalizes variance, resulting in severe \textit{mode collapse} (dropping entropy) and stripping generative diversity.
JSD provides the best stability and quality-diversity tradeoff in our setting.
We note that multi-stage methods like DUO-DCD and SDTT do not suffer from these KL instabilities purely because they rely on a pre-trained, frozen teacher network to pre-condition and collapse target variance prior to distillation. CDLM tackles this natively from scratch. 
% For more details, see Table \ref{tab:ablation_js} in the Appendix.

%% file: sections/discussion.tex
\section{Discussion and Conclusion}
\label{sec:discussion}

We introduced the Consistent Diffusion Language Model, a framework for discrete generative modeling built around enforcing multi-path discrete consistency. By supervising with exact posterior bridges, CDLM trains a path-independent denoiser for which few-step efficiency is a property of training, not a post-hoc distillation artifact. The result is a single-stage model that advances the state of scalable, high-fidelity text generation.

\paragraph{Stochastic consistency vs. ODE discretization.}
A critical distinction of our work is that CDLM is not a discretization of a Probability Flow ODE, but a generalization of consistency to the stochastic regime. In continuous domains, consistency models enforce invariance along a unique deterministic trajectory, enabling one-step generation. In discrete space, no such trajectory exists, so CDLM enforces invariance \textit{in expectation} over stochastic bridges. Consequently, the optimal one-step predictor from pure noise is the unconditional marginal distribution, which is highly multimodal. CDLM is therefore explicitly designed to resolve global structure over a \textit{few} steps (e.g., 4--8) rather than one, successfully trading the ill-posed task of one-step discrete generation for state-of-the-art efficiency in the few-step regime.

\paragraph{A general foundation for discrete domains.}
Beyond masked diffusion, the CDLM framework serves as a general recipe for discrete generative modeling. As demonstrated by our results with Uniform CDLM (UCDLM), the objective effectively accelerates diverse corruption processes and outperforms standard baselines as a pre-trained foundation for distillation. This generality suggests that the framework is not tied to language alone. Additional domains involving discrete structures with tractable posteriors, such as biological sequences, graphs, or program synthesis, can benefit from this paradigm and will be interesting future work.
Additionally, CDLM can serve as a stronger base model for the next generation of discrete generative methods. Many leading acceleration techniques, such as distillation, build on pre-trained base models. We show that CDLM outperforms MDLM as such a foundation, and it can serve as a promising replacement for large-scale pretraining or post-training mechanisms with downstream benefits \citep{nie2025large}.

\paragraph{The design space of multi-path discrete consistency.}
CDLM should be understood not as a fixed algorithm, but as a flexible framework with a rich design space. Our implementation explores one principled configuration, yet many alternatives remain, including adaptive schedules and divergence metrics. Furthermore, because CDLM is architecture- and sampler-agnostic, it directly benefits from engineering advances such as KV-caching and optimized kernels \citep{ma2025dkv,wu2026fastdllm}. The rapid evolution of continuous consistency models through similar refinements \cite{song2024improved,geng2025consistency} suggests that CDLM is a promising starting point with significant potential for further algorithmic and wall-clock efficiency gains.

In reframing discrete diffusion as the training of a path-independent denoiser, CDLM bridges the gap between the acceleration playbooks of continuous diffusion and the realities of discrete data, laying a foundation for fast, principled, and broadly applicable generative models.

%% file: sections/app.tex
\section{Related Work}
\label{sec:related_work}

\paragraph{Discrete Diffusion and Flow Models.}
\citet{austin2021structured, lou2024discrete} introduced diffusion models for discrete data, followed by MDLM \cite{sahoo2024simple} which demonstrated initial success on text modeling using a masked diffusion framework trained with a NELBO objective. Our work builds directly on MDLM, leveraging its simplified time-weighted cross-entropy loss structure. Parallel to diffusion, Discrete Flow Matching \cite{gat2024discreteflowmatching} formulates text generation by optimizing a learned marginal velocity field, yielding a training objective similar to MDLM under the masked prior. Beyond masked priors, UDLM \cite{schiff2024discreteguidance} and DUO \cite{sahoo2025the} introduced and improved uniform prior diffusion models, unlocking higher generation quality by leveraging the uniform transition kernel for guided training and sampling, as well as discretizing continuous Gaussian distributions. While CDLM shares the single-stage training paradigm of these base models, it distinguishes itself by enforcing discrete consistency constraints to achieve efficient, high-quality generation.

\paragraph{Accelerating Diffusion Language Models.}
Current acceleration efforts primarily fall into two categories. The first focuses on training-free acceleration, utilizing techniques such as KV Caching \cite{ma2025dkv,liu2025dllm} or alternative sampling and decoding strategies \cite{chen2025dpadefficientdiffusionlanguage, huang2026empiricalanalysisdecodingbiases,benhamu2025acceleratedsamplingmaskeddiffusion,gwak2025rewardweightedsamplingenhancingnonautoregressive}. The second category involves training-based distillation from a pretrained teacher. For example, DUO with Discrete Consistency Distillation (DCD) \cite{sahoo2025the} applies a consistency loss using states sampled from a discretized Gaussian path, while SDTT \cite{deschenaux2025beyond} employs self-distillation with multiple steps of teacher rollouts. Other recent approaches like Di4C \cite{hayakawa2025distillation} explore distilling discrete diffusion by leveraging dimensional correlations. As shown in the next section, methods like DUO and SDTT can be viewed as special cases of the CDLM framework, which empirically achieves better generation quality without even requiring a separate teacher model.

\paragraph{Consistency Models Family.}
Consistency models \cite{song2023consistency} were originally proposed for continuous image generation, with subsequent works improving their training stability and performance \cite{song2024improved, geng2025consistency} and extending them to multistep \cite{heek2024multistep} or stochastic settings \cite{liu2025scott}. More relevant to our approach are Consistency Diffusion Bridge Models \citep{he2024consistency}, which apply consistency training to continuous diffusion bridges, while ours leverages discrete posterior bridges. We also note the similarly named Consistent Diffusion Models \citep{daras2023consistent}, which address sampling drift in continuous diffusion rather than discrete acceleration. In the discrete domain, CDLM extends the consistency principle to allow training from scratch with arbitrary discrete priors (e.g., masked or uniform), generalizing prior efforts that relied on continuous-to-discrete mappings.

\section{Algorithms}
\label{sec:app_alg}

We present two algorithmic formulations of the CDLM training procedure.
Algorithm~\ref{alg:cd3t} describes the general \textit{Consistent Discrete Denoising Diffusion Training} (CD3T) recipe, applicable to any discrete corruption process with a tractable posterior bridge.
Algorithm~\ref{alg:mcdlm} instantiates CD3T for the special case of masked (absorbing-state) diffusion, which we call M-CDLM. In this case, the posterior bridge and loss simplify due to the absorbing structure, and the divergence is set to JSD with the path-length normalization weight $w = 1/\Delta_t$. Both algorithms use exponential moving average (EMA) updates for the target network parameters.
Note that the max-step regularization term from Eq.~\ref{eq:cdlm_final} is implemented by mixing in samples with $\delta = t$ according to the schedule $\kappa_{\text{ms}}$, which is detailed in the experimental setup (Section~4.2).

\begin{algorithm}[th!]
\caption{Consistent Discrete Denoising Diffusion Training (CD3T)}
\label{alg:cd3t}
\begin{algorithmic}
\STATE {\bfseries Input:} dataset $\mathcal{D}$, initial parameters $\theta_0$, weighting function $w(t,\delta)$, step-size schedule $\Delta_{1:T}$, EMA rate $\lambda$
\STATE {\bfseries Output:} trained model parameters $\theta$

\STATE Initialize parameters $\theta \leftarrow \theta_0$
\STATE Initialize EMA parameters $\bar{\theta} \leftarrow \theta$

\FOR{each step size $\delta_i \sim \Delta_{1:T}$}
    \STATE Sample timestep $t \sim p(t)$
    \STATE Set $s \leftarrow t - \delta_i$, where $s \sim p(s \mid t, \delta_i)$
    \STATE Sample data point $\vx_0 \sim \mathcal{D}$

    \STATE Sample forward process state
    \[
        \vx_t \sim q(\vx_t \mid \vx_0)
        = \mathrm{Cat}(\vx_t; \vx_0 \mQ_{1:t})
    \]

    \STATE Sample intermediate state
    \[
        \vx_s \sim q(\vx_s \mid \vx_t, \vx_0)
        = \mathrm{Cat}\!\left(
        \vx_s ;
        \frac{(\vx_0 \mQ_{1:s}) \odot (\vx_t \mQ_{s+1:t}^\top)}{\langle \vx_0 \mQ_{1:t}, \vx_t \rangle}
        \right)
    \]

    \STATE Compute consistency loss
    \[
        \mathcal{L}(\theta, \bar{\theta})
        = w(t,\delta_i)\,
        \mathbb{D}\!\left(
        f_\theta(\vx_t, t)
        \,\big\|\, \mathrm{sg}\!\left[f_{\bar{\theta}}(\vx_s, s)\right]
        \right)
    \]

    \STATE Update parameters
    \[
        \theta \leftarrow \theta - \eta \nabla_\theta \mathcal{L}(\theta, \bar{\theta})
    \]

    \STATE Update EMA parameters
    \[
        \bar{\theta} \leftarrow \lambda \bar{\theta} + (1-\lambda)\theta
    \]
\ENDFOR

\STATE {\bfseries return} $\theta$
\end{algorithmic}
\end{algorithm}

\begin{algorithm}[tb]
\caption{Masked Consistent Diffusion Language Model (M-CDLM)}
\label{alg:mcdlm}
\begin{algorithmic}
\STATE {\bfseries Input:} dataset $\mathcal{D}$, initial parameters $\theta_0$, step-size schedule $\Delta_{1:T}$, EMA rate $\lambda$
\STATE {\bfseries Output:} trained model parameters $\theta$

\STATE Initialize parameters $\theta \leftarrow \theta_0$
\STATE Initialize EMA parameters $\tilde{\theta} \leftarrow \theta$

\FOR{each step size $\delta_t \in \Delta_{1:T}$}

    \STATE Sample sequence
    \[
        \vx_0 = (\vx_0^1,\ldots,\vx_0^L) \sim \mathcal{D},
        \quad \vx_0^i \in \mathcal{V}
    \]

    \STATE Sample corrupted sequence $\vx_t \sim q(\vx_t \mid \vx_0)$, where
    \[
        q(\vx_t^i = \vk \mid \vx_0^i) =
        \begin{cases}
            1 - t & \text{if } \vk = \vx_0^i, \\
            t & \text{if } \vk = \texttt{[MASK]}, \\
            0 & \text{otherwise.}
        \end{cases}
    \]

    \STATE Sample intermediate state $\vx_s \sim q(\vx_s \mid \vx_t, \vx_0)$, where
    \[
        q(\vx_s^i = \vk \mid \vx_t^i, \vx_0^i) =
        \begin{cases}
            1 & \text{if } \vx_t^i \neq \texttt{[MASK]} \text{ and } \vk = \vx_t^i, \\
            \frac{t-s}{t} & \text{if } \vx_t^i = \texttt{[MASK]} \text{ and } \vk = \vx_0^i, \\
            \frac{s}{t} & \text{if } \vx_t^i = \texttt{[MASK]} \text{ and } \vk = \texttt{[MASK]}, \\
            0 & \text{otherwise.}
        \end{cases}
    \]

    \STATE Compute consistency loss
    \[
        \mathcal{L}(\theta)
        = \frac{1}{\delta_t}
        \sum_{i:\,\vx_t^i = \texttt{[MASK]}}
        \mathbb{D}_{\mathrm{JSD}}\!\left(
        f_\theta(\vx_t, t)_i
        \,\big\|\, \mathrm{sg}\!\left[f_{\tilde{\theta}}(\vx_s, s)_i\right]
        \right)
    \]

    \STATE Update parameters
    \[
        \theta \leftarrow \theta - \eta \nabla_\theta \mathcal{L}(\theta)
    \]

    \STATE Update EMA parameters
    \[
        \tilde{\theta} \leftarrow \lambda \tilde{\theta} + (1-\lambda)\theta
    \]
\ENDFOR

\STATE {\bfseries return} $\theta$
\end{algorithmic}
\end{algorithm}

\section{From Local to Global Consistency}
\label{subsec:app_local_to_global_consistency}
The following result formalizes the intuition that enforcing local consistency provides a mathematically rigorous foundation for achieving global path-independence. By drawing $\vx_0 \sim \mathcal{D}$, $\vx_{\tau_k} \sim q(\vx_{\tau_k}|\vx_0)$, and $\vx_{\tau_{k-1}} \sim q(\vx_{\tau_{k-1}}|\vx_{\tau_k}, \vx_0)$, our training scheme perfectly samples the exact joint marginal $p(\vx_{\tau_k}, \vx_{\tau_{k-1}})$. We therefore define our error bounds strictly in terms of the unconditional true reverse transition, which relies only on observable states.

\begin{lemma}[Global Consistency Bound]
\label{lem:error_accumulation}
Let $\mathbb{D}_{\text{TV}}(\cdot, \cdot)$ be the Total Variation distance, which acts as a valid norm on the probability simplex (satisfying the triangle inequality and joint convexity). Let the expectation operator over the unconditional true reverse chain be denoted as $E_{j|i}[\cdot] \equiv \mathbb{E}_{\vx_{\tau_j} \sim p(\cdot \mid \vx_{\tau_i})}[\cdot]$. 

For a time grid $1=\tau_K > \dots > \tau_0=0$, if the expected local consistency error against the unconditional reverse transition for any one-step jump is uniformly bounded by $\varepsilon$:
\[
\mathbb{E}_{\vx_{\tau_k}}\left[ \mathbb{D}_{\text{TV}}\left(f(\vx_{\tau_k},\tau_k),~ E_{k-1|k} \big[ f(\vx_{\tau_{k-1}},\tau_{k-1}) \big] \right) \right] \le \varepsilon \quad \text{for all } k \in \{1,\dots,K\},
\]
then the global expected error between the boundary points $\tau_m$ and $\tau_K$ on the grid is linearly bounded by:
\[
\mathbb{E}_{\vx_{\tau_K}}\left[ \mathbb{D}_{\text{TV}}\left(f(\vx_{\tau_K},\tau_K),~ E_{m|K} \big[ f(\vx_{\tau_m},\tau_m) \big] \right) \right] \;\le\; (K-m)\varepsilon.
\]
\end{lemma}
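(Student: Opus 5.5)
We will argue by a telescoping decomposition combined with the tower property of the unconditional reverse chain. For $m \le k \le K$ set $g_k(\vx_{\tau_K}) := E_{k|K}\big[f(\vx_{\tau_k},\tau_k)\big]$, so that $g_K = f(\vx_{\tau_K},\tau_K)$ and $g_m$ is the target in the claimed bound. The triangle inequality for $\mathbb{D}_{\text{TV}}$ gives, pointwise in $\vx_{\tau_K}$,
\[
\mathbb{D}_{\text{TV}}\big(g_K, g_m\big) \le \sum_{k=m+1}^{K} \mathbb{D}_{\text{TV}}\big(g_k, g_{k-1}\big).
\]
Taking expectations over $\vx_{\tau_K}$, it then suffices to show that each term has expectation at most $\varepsilon$. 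Since there are $K-m$ terms, this yields the bound $(K-m)\varepsilon$.

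For a single term, we first use the Markov (tower) property of the true reverse chain: $E_{k-1|K}[\cdot] = E_{k|K}\big[E_{k-1|k}[\cdot]\big]$. This holds because the forward process is Markov, so its time reversal under the joint law $p(\vx_{\tau_K},\dots,\vx_{\tau_0})$ is also Markov; equivalently, $\vx_{\tau_{k-1}}$ is conditionally independent of $\vx_{\tau_K}$ given $\vx_{\tau_k}$. This gives
\[
g_k - g_{k-1} = E_{k|K}\Big[f(\vx_{\tau_k},\tau_k) - E_{k-1|k}\big[f(\vx_{\tau_{k-1}},\tau_{k-1})\big]\Big].
\]
Joint convexity of $\mathbb{D}_{\text{TV}}$, i.e.\ Jensen's inequality for the norm, then yields
\[
\mathbb{D}_{\text{TV}}(g_k,g_{k-1}) \le E_{k|K}\Big[\mathbb{D}_{\text{TV}}\big(f(\vx_{\tau_k},\tau_k), E_{k-1|k}[f(\vx_{\tau_{k-1}},\tau_{k-1})]\big)\Big].
\]
Averaging over $\vx_{\tau_K}\sim p_{\tau_K}$ and using $\mathbb{E}_{\vx_{\tau_K}} E_{k|K}[h(\vx_{\tau_k})] = \mathbb{E}_{\vx_{\tau_k}\sim p_{\tau_k}}[h(\vx_{\tau_k})]$, which is just marginalization of the joint, the right-hand side becomes exactly the local error at step $k$. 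That quantity is at most $\varepsilon$ by hypothesis.

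The only delicate step is justifying the tower property. Expectations are under the \emph{unconditional} reverse chain, and it must be noted that $p(\vx_{\tau_{k-1}}\mid \vx_{\tau_k},\vx_{\tau_K}) = p(\vx_{\tau_{k-1}}\mid\vx_{\tau_k})$. We will check this from the forward factorization $q(\vx_{\tau_K}\mid\vx_{\tau_k})q(\vx_{\tau_k}\mid \vx_{\tau_{k-1}})p(\vx_{\tau_{k-1}})$ via Bayes' rule. The bridge semigroup of Lemma~\ref{lem:bridge} gives the same statement conditionally on $\vx_0$, but that is not what is needed here. Everything else is the triangle inequality, Jensen's inequality for the norm applied to per-position vectors (summed or maximized over positions, applied consistently), and linearity of expectation.
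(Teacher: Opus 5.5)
Your proof is correct and uses the same ingredients as the paper's proof: the Markov (Chapman--Kolmogorov) property of the unconditional reverse chain, the triangle inequality, Jensen's inequality for the TV norm, and the law of total expectation. The only difference is bookkeeping. The paper proves the one-step recursion $\mathcal{E}(k,m)\le\varepsilon+\mathcal{E}(k-1,m)$ for $\mathcal{E}(k,m)=\mathbb{E}_{\vx_{\tau_k}}[\mathbb{D}_{\text{TV}}(f_k,E_{m|k}[f_m])]$ and unrolls it, whereas you telescope the lifted predictions $g_k=E_{k|K}[f_k]$ at level $\tau_K$ all at once, which gives the same bound.
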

\begin{proof}
The proof proceeds by a recursive application of the triangle inequality, leveraging the convexity of norms and the law of total expectation. For clarity, let $f_k \equiv f(\vx_{\tau_k}, \tau_k)$. Because discrete diffusion forms a true Markov chain, the unconditional reverse transitions inherently satisfy the Chapman-Kolmogorov equation. Therefore, the expectation of the target can be written as a telescoping sequence of conditional expectations:
\[
E_{m|K} [f_m] = E_{K-1|K} \circ E_{K-2|K-1} \circ \dots \circ E_{m|m+1}[f_m].
\]
Let $\mathcal{E}(k, m) = \mathbb{E}_{\vx_{\tau_k}} \left[ \mathbb{D}_{\text{TV}}\left(f_k, E_{m|k}[f_m]\right) \right]$ be the expected global error from step $k$ to $m$. We establish a recursive bound. Consider the error at step $k$:
\begin{align*}
    \mathbb{D}_{\text{TV}}\left(f_k, E_{m|k}[f_m]\right) &= \mathbb{D}_{\text{TV}}\left(f_k, E_{k-1|k}[E_{m|k-1}[f_m]]\right) \\
    &\le \mathbb{D}_{\text{TV}}\left(f_k, E_{k-1|k}[f_{k-1}]\right) + \mathbb{D}_{\text{TV}}\left(E_{k-1|k}[f_{k-1}], E_{k-1|k}[E_{m|k-1}[f_m]]\right) && \text{(Triangle Ineq.)} \\
    &\le \mathbb{D}_{\text{TV}}\left(f_k, E_{k-1|k}[f_{k-1}]\right) + E_{k-1|k}\left[\mathbb{D}_{\text{TV}}\left(f_{k-1}, E_{m|k-1}[f_m]\right)\right] && \text{(Jensen's Ineq.)}
\end{align*}
The second step mathematically relies on the joint convexity of the Total Variation distance. By Jensen's inequality, we can pull the expectation $E_{k-1|k}$ outside: $\mathbb{D}_{\text{TV}}(A, \mathbb{E}[B]) \le \mathbb{E}[\mathbb{D}_{\text{TV}}(A, B)]$. 

Now, taking the outer expectation $\mathbb{E}_{\vx_{\tau_k}}$ over the entire inequality, and applying the law of total expectation $\mathbb{E}_{\vx_{\tau_k}} [E_{k-1|k}[\cdot]] = \mathbb{E}_{\vx_{\tau_{k-1}}}[\cdot]$, we obtain:
\begin{align*}
    \mathcal{E}(k, m) &\le \mathbb{E}_{\vx_{\tau_k}}\left[\mathbb{D}_{\text{TV}}\left(f_k, E_{k-1|k}[f_{k-1}]\right)\right] + \mathbb{E}_{\vx_{\tau_{k-1}}}\left[\mathbb{D}_{\text{TV}}\left(f_{k-1}, E_{m|k-1}[f_m]\right)\right] \\
    &= \varepsilon + \mathcal{E}(k-1, m)
\end{align*}
By unrolling this recursive relationship $\mathcal{E}(k, m) \le \varepsilon + \mathcal{E}(k-1, m)$ from $k=K$ down to $m+1$, and noting that $\mathcal{E}(m, m) = \mathbb{E}[\mathbb{D}_{\text{TV}}(f_m, f_m)] = 0$, the final bound evaluates strictly to:
\[
\mathcal{E}(K, m) \le (K-m)\varepsilon.
\]
Because both the predictor and the expectation target depend exclusively on the observable state $\vx_{\tau_k}$, an expressive neural network can theoretically drive the tracking error $\varepsilon \to 0$, rendering the bound mathematically realizable.
\end{proof}

\begin{remark}[Bounding Tracking Error via Excess Risk]
While Lemma \ref{lem:error_accumulation} bounds global error based on the local TV tracking error $\varepsilon$, our practical training objective minimizes a statistical divergence. Because the exact discrete bridge is inherently stochastic, comparing a deterministic predictor $Q = f_k(\vx_{\tau_k})$ against a stochastic target distribution $P = f_{k-1}(\vx_{\tau_{k-1}})$ yields an absolute expected training loss $\mathcal{L}_{local}$ that is strictly lower-bounded by an irreducible path variance (Bayes Risk, $\mathcal{L}^* > 0$). Bounding $\varepsilon$ directly using absolute loss would thus result in a mathematically vacuous bound.

However, we can rigorously bound the true tracking error $\varepsilon$ using the concept of \textit{excess risk}. Consider the idealized setting where training minimizes the Forward KL divergence (which uniquely satisfies Proposition \ref{prop:optimal_predictor}). Given $\vx_{\tau_k}$, the expected loss $\mathcal{L}(Q) = \mathbb{E}_{P \sim p(\cdot \mid \vx_{\tau_k})} [\mathrm{KL}(P \parallel Q)]$ decomposes exactly as:
\begin{align*}
\mathcal{L}(Q) &= \mathbb{E}_{P \sim p(\cdot \mid \vx_{\tau_k})} \big[ \mathrm{KL}(P \parallel \mathbb{E}[P \mid \vx_{\tau_k}]) \big] + \mathrm{KL}(\mathbb{E}[P \mid \vx_{\tau_k}] \parallel Q) \\
&= \mathcal{L}^*(\vx_{\tau_k}) + \mathcal{E}_{\text{opt}}(Q),
\end{align*}
where $\mathcal{L}^*$ is the irreducible path variance of the stochastic bridge, and $\mathcal{E}_{\text{opt}}(Q)$ is the \textit{excess optimization risk} (the error of the neural network matching the exact true mean). 

By Pinsker's inequality, the true local tracking error $\varepsilon$ under TV distance is strictly bounded by the excess risk, not the absolute loss. Taking the expectation over all $\vx_{\tau_k}$ and applying Jensen's inequality (concavity of $\sqrt{\cdot}$):
\[
\varepsilon = \mathbb{E}_{\vx_{\tau_k}} \big[ \mathbb{D}_{\text{TV}}(\mathbb{E}[P \mid \vx_{\tau_k}], Q) \big] \le \mathbb{E}_{\vx_{\tau_k}} \left[ \sqrt{ \frac{1}{2} \mathrm{KL}(\mathbb{E}[P \mid \vx_{\tau_k}] \parallel Q) } \right] \le \sqrt{\frac{1}{2} \mathbb{E}_{\vx_{\tau_k}}[\mathcal{E}_{\text{opt}}(Q)]}.
\]
Applying Lemma \ref{lem:error_accumulation}, the global expected TV error over a $K$-step trajectory is strictly bounded by $\mathcal{E}_{\text{TV}}(K, 0) \le K \sqrt{\frac{1}{2} \mathbb{E}[\mathcal{E}_{\text{opt}}]}$. This cleanly isolates the reducible optimization error from the irreducible path variance, proving that our global tracking guarantee depends purely on network capacity and optimization success. While our practical algorithm utilizes JSD as a variance-bounded structural regularizer (as discussed in Section 3.2), this KL-based derivation formally validates that the stochastic MPDC objective mathematically supports meaningful global tracking bounds.
\end{remark}

\section{Rigorous Unification of Discrete Generative Models}
\label{app:theoretical_unification}

In Section~\ref{subsec:unifying_view}, we conceptually positioned CDLM as a unifying framework for discrete generative modeling. Here, we provide rigorous derivations establishing how various existing paradigms emerge as specific instantiations, analytic limits, or empirical approximations of the general CDLM framework.

To facilitate a unified analysis, we define the \textit{Generalized Consistency Objective} over a joint coupling distribution $\Pi(\vx_s, \vx_t \mid \vx_0)$:
\begin{align}
    \mathcal{L}_{\Pi}(\theta; t, s) = \mathbb{E}_{\vx_0 \sim \mathcal{D}} \mathbb{E}_{(\vx_s, \vx_t) \sim \Pi(\cdot, \cdot \mid \vx_0)} \left[ \sum_{i \in \mathcal{M}(\vx_t)} w(t, \delta) \cdot \mathbb{D}\big( f_{\tilde{\theta}}(\vx_s, s)_i \,\big\|\, f_{\theta}(\vx_t, t)_i \big) \right],
    \label{eq:app_generalized_consistency}
\end{align}
where $\delta = t-s$. In CDLM, the coupling is the exact Markovian forward-backward transition: $\Pi_{\text{CDLM}}(\vx_s, \vx_t \mid \vx_0) = q(\vx_t \mid \vx_0) q(\vx_s \mid \vx_t, \vx_0)$.

\subsection{Exact Equivalence to Masked Diffusion Models (MDLM)}

\begin{proposition}[MDLM as Max-Step CDLM]
\label{prop:unify_mdlm}
Assume the CDLM objective is configured with the maximum step size $\delta = t$ (implying $s = 0$), the divergence measure $\mathbb{D}(Q \| P) := \mathrm{KL}(P \| Q)$ (Forward KL, where $P$ is the target), and the continuous-time weighting function $w(t, \delta) = -\frac{\alpha_t'}{1-\alpha_t}$. Then, the CDLM loss is mathematically equivalent to the continuous-time Negative Evidence Lower Bound (NELBO) optimized by MDLM.
\end{proposition}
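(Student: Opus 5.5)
We plan to substitute the max-step configuration directly into the Generalized Consistency Objective (Eq.~\ref{eq:app_generalized_consistency}) with the CDLM coupling $\Pi_{\text{CDLM}}$ and reduce it term by term to the MDLM NELBO. First, set $s=0$. By Lemma~\ref{lem:bridge} with $s=0$ we have $\mQ_{1:0}=\mI$, so the bridge $q(x_0^i\mid x_t^i,x_0^i)$ is proportional to $x_0^i\odot(x_t^i\mQ_{1:t}^\top)$. Its only nonzero entry is at the true token, so the sampled $\vx_s$ equals $\vx_0$ almost surely. The target $f_{\tilde\theta}(\vx_0,0)_i$ must then be identified with the one-hot vector $x_0^i$. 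We will justify this through the boundary condition $f^\star(\vx_0,0)=\vx_0$ from the Global Multi-Path Consistency definition, which is also how the max-step anchor in Eq.~\ref{eq:cdlm_final} is written, namely $\mathbb{D}(f_\theta\|\vx_0)$. We will state explicitly that the target network is evaluated at the boundary, where the boundary condition is imposed rather than learned.

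Second, we evaluate the divergence. Under the convention $\mathbb{D}(Q\|P)=\mathrm{KL}(P\|Q)$ with the target $P=x_0^i$ a point mass, we have $\mathrm{KL}(x_0^i\|f_\theta(\vx_t,t)_i)=-\log\langle f_\theta(\vx_t,t)_i,x_0^i\rangle$, since the entropy of a one-hot vector is zero. The loss therefore becomes
\begin{align*}
\mathbb{E}_{\vx_0}\mathbb{E}_{t}\,\mathbb{E}_{\vx_t\sim q(\cdot\mid\vx_0)}\Big[-\frac{\alpha_t'}{1-\alpha_t}\sum_{i:\,x_t^i=\texttt{[MASK]}}\log\langle f_\theta(\vx_t,t)_i,x_0^i\rangle\Big],
\end{align*}
where $\mathcal{M}(\vx_t)$ is the set of masked positions for absorbing diffusion. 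The expectation over $t$ is taken uniformly on $[0,1]$, which turns it into the time integral.

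Third, we match this to the MDLM NELBO of \citet{sahoo2024simple}, which is $\mathbb{E}_{q}\int_0^1\frac{\alpha_t'}{1-\alpha_t}\sum_i\log\langle \vx_\theta^i(\vx_t,t),x_0^i\rangle\,dt$. The sum there runs only over masked positions, because MDLM uses the carry-over unmasking (SUBS) parameterization: unmasked tokens are copied exactly, so they contribute $\log 1=0$. We must also check that $\alpha_t=1-t$ matches the linear schedule in Algorithm~\ref{alg:mcdlm}. This gives $-\alpha_t'/(1-\alpha_t)=1/t=1/\delta$, consistent with the $w(\delta)=1/\delta$ weighting. The two expressions then agree exactly.

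The main obstacle is the bookkeeping of conventions rather than any computation. Three points need care: the reversed argument order of $\mathbb{D}$ in Eq.~\ref{eq:app_generalized_consistency}; identifying the target network's output at $s=0$ with the data, which requires invoking the boundary condition rather than $f_{\tilde\theta}$ itself; and the equivalence between restricting to $\mathcal{M}(\vx_t)$ and MDLM's SUBS parameterization. We will state the last one as an assumption on the model class, namely carry-over unmasking together with zero probability on \texttt{[MASK]}. Under that assumption the unmasked terms vanish identically.
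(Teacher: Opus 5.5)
Your argument follows the same route as the paper's proof. You collapse the bridge at $s=0$ to a point mass on $\vx_0$, invoke the boundary condition $f_{\tilde\theta}(\vx_0,0)=\vx_0$, reduce the forward KL against a one-hot target to $-\log\langle f_\theta(\vx_t,t)_i,x_0^i\rangle$, and match the result to Eq.~11 of \citet{sahoo2024simple}. Your remarks on uniform $t$ and on SUBS carry-over (which justifies restricting the sum to masked positions) are sensible additions that the paper leaves implicit.

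There is one sign slip to fix. With $w=-\alpha_t'/(1-\alpha_t)$ and $\mathrm{KL}=-\log\langle\cdot\rangle$, the integrand is $\bigl(-\frac{\alpha_t'}{1-\alpha_t}\bigr)\bigl(-\log\langle f_\theta(\vx_t,t)_i,x_0^i\rangle\bigr)=\frac{\alpha_t'}{1-\alpha_t}\log\langle f_\theta(\vx_t,t)_i,x_0^i\rangle\ge 0$. Your display instead puts the coefficient $-\frac{\alpha_t'}{1-\alpha_t}$ on $\log\langle\cdot\rangle$, so it is nonpositive and equals the negative of the NELBO you quote in your third step. As written, your closing claim that the two expressions agree exactly is therefore false; correcting the sign makes them agree.

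Separately, specializing to $\alpha_t=1-t$ is only needed for the $1/\delta$ remark. The proposition itself holds for any schedule, provided the forward kernel masks with probability $1-\alpha_t$.
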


\begin{proof}
Substituting $\delta = t \implies s = 0$ into the exact posterior bridge $q(\vx_s \mid \vx_t, \vx_0)$ yields a deterministic identity mapping to the clean data:
\begin{align}
    q(\vx_s \mid \vx_t, \vx_0)\big|_{s=0} = \mathbb{I}(\vx_s = \vx_0).
\end{align}
Because the intermediate state is deterministically $\vx_0$, the target network evaluates at the boundary. By the structural boundary condition established in Definition 3.3, $f_{\tilde{\theta}}(\vx_0, 0) = \vx_0$, where $\vx_0$ is the one-hot representation of the data tokens. 

The expectation over $\vx_s$ vanishes, and the CDLM loss simplifies to:
\begin{align}
    \mathcal{L}_{\text{CDLM}}(\theta; t, 0) = \mathbb{E}_{\vx_0, \vx_t} \left[ \sum_{i \in \mathcal{M}(\vx_t)} -\frac{\alpha_t'}{1-\alpha_t} \cdot \mathrm{KL}\big( \vx_0^i \,\big\|\, f_{\theta}(\vx_t, t)_i \big) \right].
\end{align}
Because the true data distribution $\vx_0^i$ is a deterministic one-hot vector, its entropy is zero ($\mathrm{H}(\vx_0^i) = 0$). The Forward KL divergence exactly reduces to the Cross-Entropy (CE) loss:
\begin{align}
    \mathrm{KL}\big( \vx_0^i \,\big\|\, f_{\theta}(\vx_t, t)_i \big) = \mathrm{CE}\big(\vx_0^i, f_{\theta}(\vx_t, t)_i\big) = -\log f_{\theta}(\vx_t, t)_{i, \vx_0^i} = -\log \langle f_{\theta}(\vx_t, t)_i, \vx_0^i \rangle.
\end{align}
Substituting this yields:
\begin{align}
    \mathcal{L}_{\text{CDLM}}(\theta; t, 0) = \mathbb{E}_{\vx_0, \vx_t} \left[ -\frac{\alpha_t'}{1-\alpha_t} \sum_{i \in \mathcal{M}(\vx_t)} \big(- \log \langle f_{\theta}(\vx_t, t)_i, \vx_0^i \rangle\big) \right] = \mathbb{E}_{\vx_0, \vx_t} \left[ \frac{\alpha_t'}{1-\alpha_t} \sum_{i \in \mathcal{M}(\vx_t)} \log \langle f_{\theta}(\vx_t, t)_i, \vx_0^i \rangle \right],
\end{align}
which is identical to the continuous NELBO derived in Eq. 11 of \citet{sahoo2024simple}. Since $\alpha_t$ is monotonically decreasing, $\alpha_t' < 0$, so the coefficient $-\alpha_t'/(1-\alpha_t)$ is positive and the integrand $-\frac{\alpha_t'}{1-\alpha_t}\log\langle f_\theta(\vx_t,t)_i, \vx_0^i\rangle$ is nonnegative. The max-step CDLM loss therefore reduces exactly to the standard time-weighted cross-entropy/NELBO objective used by MDLM, up to the sign convention used for writing the variational bound.
\end{proof}

\subsection{Continuous Consistency Training via Stochastic Coupling}

\begin{proposition}[Stochastic Analogue of Consistency Models]
\label{prop:unify_cm}
In the local-step limit $\delta \to 0$, CDLM recovers the local consistency training objective of continuous Consistency Models, substituting the deterministic Probability Flow ODE step with the exact stochastic posterior jump.
\end{proposition}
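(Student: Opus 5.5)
The plan is to compare the two objectives term by term, beginning from the generalized consistency objective in Eq.~\ref{eq:app_generalized_consistency} with the CDLM coupling $\Pi_{\text{CDLM}}(\vx_s,\vx_t\mid\vx_0)=q(\vx_t\mid\vx_0)\,q(\vx_s\mid\vx_t,\vx_0)$. Continuous consistency training draws $\vx_t=\vx_0+\sigma(t)\epsilon$ and constructs the paired point $\vx_s=\vx_0+\sigma(s)\epsilon$ with the \emph{same} $\epsilon$. Its loss is $\mathbb{E}\,[\lambda(t)\,d(f_\theta(\vx_t,t),f_{\theta^-}(\vx_s,s))]$, where the EMA target sits under a stop-gradient, and it is analyzed in the limit $s\to t$. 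First, I will place this continuous loss in the same form as Eq.~\ref{eq:app_generalized_consistency}. Its coupling is $\Pi_{\text{CM}}(\vx_s,\vx_t\mid\vx_0)=\int \delta(\vx_t-\vx_0-\sigma(t)\epsilon)\,\delta(\vx_s-\vx_0-\sigma(s)\epsilon)\,\mathcal{N}(\epsilon;0,\mI)\,d\epsilon$. This is a deterministic map $\vx_t\mapsto\vx_s$ given $\vx_0$, and as $\delta\to0$ it agrees to first order with a single Euler step of the PF-ODE under the oracle score. Both objectives then share the same template: an EMA/stop-gradient target, a positive weight, a divergence, and a coupling between two noise levels. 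They differ only in $\Pi$.

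Second, I will expand the discrete bridge of Lemma~\ref{lem:bridge} for small $\delta$. Writing $\mQ_{s+1:t}=\mI+\delta\,R_t+o(\delta)$ for a rate matrix $R_t$, the numerator $(x_0\mQ_{1:s})\odot(x_t\mQ_{s+1:t}^\top)$ shows that $q(\vx_s\mid\vx_t,\vx_0)=\mathbb{I}(\vx_s=\vx_t)+\delta\,\mathcal{G}_t(\vx_s\mid\vx_t,\vx_0)+o(\delta)$. Here $\mathcal{G}_t$ is the generator of the time-reversed conditional CTMC. For masking this is concrete: each masked position unmasks to $x_0^i$ with probability $\delta/t$, which matches Algorithm~\ref{alg:mcdlm}. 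So in the local limit CDLM compares $f_\theta(\vx_t,t)$ with the target evaluated at $\vx_t$ after at most $O(\delta)$ expected jumps. This is the exact stochastic analogue of an infinitesimal ODE step, and the deterministic displacement $\dot\sigma\sigma\nabla\log p_t\,\delta$ is replaced by a Poisson jump with intensity given by $\mathcal{G}_t$.

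Third, I will take the limit of the loss. I will condition on the events ``no jump'' (probability $1-O(\delta)$) and ``one jump'' ($O(\delta)$), and then Taylor-expand $f_{\tilde\theta}(\vx_s,s)$ in $s$ on the no-jump branch. This gives $\mathcal{L}_{\Pi_{\text{CDLM}}}=\mathbb{E}\big[w\,\mathbb{D}(f_{\tilde\theta}(\vx_t,t)-\delta\,\partial_tf_{\tilde\theta}+\delta\sum_{\vx'}\mathcal{G}_t(\vx'\mid\vx_t,\vx_0)[f_{\tilde\theta}(\vx',t)-f_{\tilde\theta}(\vx_t,t)]\,\|\,f_\theta(\vx_t,t))\big]+o(\delta)$. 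Under a smooth $\mathbb{D}$ the leading term is a quadratic form in the total derivative of $f$ along the reverse process. In the continuous case the analogous term is $\partial_t f+\nabla_\vx f\cdot\dot{\vx}_t$, which is the quantity controlled by continuous consistency training (Song et al.). Therefore both first-order conditions state that $f$ is a martingale, or a constant along trajectories, under the respective reverse dynamics. By the tower property, the discrete expected-jump term averaged over $\vx_0\sim p(\vx_0\mid\vx_t)$ is exactly $[\mathcal{C}_{s\leftarrow t}f](\vx_t,t)$ to first order. This matches Eq.~\ref{eq:global_consistency}, which ties the result back to Proposition~\ref{prop:optimal_predictor}.

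I expect the main obstacle to be that a jump moves the prediction by $O(1)$ with probability $O(\delta)$, whereas the ODE step moves it by $O(\delta)$ with probability one. As a result the discrete divergence scales as $O(\delta)$ rather than $O(\delta^2)$. To handle this, I will use the weight $w=1/\delta$ already adopted in the paper to normalize the loss. I will state the limit as a convergence of normalized objectives (a Dynkin-type generator identity) rather than as a literal equality, and I will note explicitly that recovering the deterministic ODE limit is only an analogy at the level of couplings.
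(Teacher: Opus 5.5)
Your first step, writing both losses in the template of Eq.~\ref{eq:app_generalized_consistency} and contrasting the couplings, is already the entirety of the paper's proof. The paper places the Dirac coupling $\delta_{\mathrm{Dirac}}(\vx_s-\Phi(\vx_t,t,s))$ beside $\mathbb{E}_{\vx_0\sim p(\vx_0\mid\vx_t)}[q(\vx_s\mid\vx_t,\vx_0)]$ and reads off the analogy, with no expansion in $\delta$ at all. That step, together with your closing caveat, covers everything the paper establishes. There is one inaccuracy in it: the shared-$\epsilon$ pairing agrees to first order with an Euler step driven by the single-sample score $-(\vx_t-\vx_0)/\sigma(t)^2$, not the oracle score. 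It therefore matches the oracle Euler step only in conditional mean given $\vx_t$. The genuine problem is the quantitative limit you add in step 3, which fails as written.

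Your obstacle paragraph correctly notices that the discrete loss is of order $\delta$. That contradicts your displayed expansion, whose divergence term is $O(\delta^2)$ at $\theta=\tilde\theta$. The cause is that the display moves the expectation over the bridge inside the divergence. The loss is $\mathbb{E}_{\vx_s}[\mathbb{D}(f_{\tilde\theta}(\vx_s,s)\,\|\,Q)]$ with $Q=f_\theta(\vx_t,t)$, not $\mathbb{D}(\mathbb{E}_{\vx_s}[f_{\tilde\theta}(\vx_s,s)]\,\|\,Q)$. Since a jump has probability $O(\delta)$ but moves the target by $O(1)$, the correct expansion is
\[
\mathbb{E}_{\vx_s}\big[\mathbb{D}(f_{\tilde\theta}(\vx_s,s)\,\|\,Q)\big]=\mathbb{D}(f_{\tilde\theta}(\vx_t,s)\,\|\,Q)+\delta\sum_{\vx'}\mathcal{G}_t(\vx'\mid\vx_t,\vx_0)\big[\mathbb{D}(f_{\tilde\theta}(\vx',s)\,\|\,Q)-\mathbb{D}(f_{\tilde\theta}(\vx_t,s)\,\|\,Q)\big]+o(\delta).
\]
At $\theta=\tilde\theta$ the jump term is $\Theta(\delta)$ and dominates. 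The discrepancy with your formula is therefore $\Theta(\delta)$, not $o(\delta)$, and the leading term is not a quadratic form in a total derivative.

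The weight $w=1/\delta$ does not repair this. At $\theta=\tilde\theta$ the normalized loss tends to the jump functional $\sum_{\vx'}\mathcal{G}_t(\vx'\mid\vx_t,\vx_0)\,\mathbb{D}(f_{\tilde\theta}(\vx',t)\,\|\,f_\theta(\vx_t,t))$, which is generically positive even at $f^*$. For forward KL it equals $\lim_{\delta\to0}\mathcal{L}^*/\delta$, where $\mathcal{L}^*$ is the irreducible path variance of the paper's Remark. For masking with revealed tokens copied through, the self-unmasking jumps alone contribute the MDLM term $\frac{1}{t}\sum_{i\in\mathcal{M}(\vx_t)}\mathbb{D}(x_0^i\,\|\,f_\theta(\vx_t,t)_i)$. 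Meanwhile, the $\delta^{-2}$ scaling used for continuous-time CT makes the loss diverge. So normalized \emph{objectives} do not converge to a CT-type quantity.

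What survives is a statement about the stop-gradient \emph{gradient} under a Bregman (mean-eliciting) divergence with the target in the first slot, e.g.\ forward KL as in Proposition~\ref{prop:optimal_predictor}. In that case:
\begin{itemize}
\item $\mathbb{E}[\mathbb{D}(P\,\|\,Q)]=\mathbb{D}(\bar P\,\|\,Q)$ plus a $Q$-independent Jensen gap.
\item Your formula is right for the $Q$-dependent part once $\vx_0$ is also averaged over $p(\vx_0\mid\vx_t)$.
\item At $\theta=\tilde\theta$ the $1/\delta$-weighted gradient converges to $-\sum_v\big[(-\partial_t+\bar{\mathcal{G}}_t)f_{\tilde\theta}\big](v)\,\nabla_\theta\log f_\theta(\vx_t,t)(v)$, where $\bar{\mathcal{G}}_t$ is the posterior-averaged reverse generator.
\end{itemize}
This is the discrete counterpart of Song et al.'s continuous-time CT gradient, and it vanishes under the Kolmogorov-backward (martingale) form of Eq.~\ref{eq:global_consistency}. ``Smooth $\mathbb{D}$'' is not sufficient for this. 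For the JSD that CDLM actually trains with, the $O(\delta)$ jump enters the gradient nonlinearly in $f_{\tilde\theta}(\vx',t)$, and stationarity is no longer the martingale condition.
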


\begin{proof}
In continuous domains, Consistency Training (CT) enforces local self-consistency across adjacent steps by relying on a one-step ODE solver $\Phi(\vx_t, t, s)$ to approximate the Probability Flow ODE. Their consistency distillation loss \citep[Eq. 7]{song2023consistency} enforces matching over a deterministic coupling:
\begin{align}
    \Pi_{\text{CT}}(\vx_s \mid \vx_t) = \delta_{\text{Dirac}}\big(\vx_s - \Phi(\vx_t, t, s)\big).
\end{align}
In categorical discrete spaces, the Probability Flow ODE $\Phi$ does not exist. The true reverse transition $p(\vx_s \mid \vx_t)$ is intrinsically stochastic. CDLM structurally mirrors the consistency update by replacing the deterministic ODE coupling with the exact stochastic posterior bridge:
\begin{align}
    \Pi_{\text{CDLM}}(\vx_s \mid \vx_t) = \mathbb{E}_{\vx_0 \sim p(\vx_0 \mid \vx_t)} \big[ q(\vx_s \mid \vx_t, \vx_0) \big].
\end{align}
Therefore, taking the local-step limit $\delta \to 0$, CDLM acts as the mathematically rigorous generalization of Consistency Models to discrete jump processes.
\end{proof}

\subsection{Analytical Bypass of Progressive Distillation and Shortcut Models}

\begin{proposition}[Analytic Composition of Multi-Step Bridges]
\label{prop:unify_pd_shortcut}
Progressive Distillation \citep{salimans2022progressive} and Shortcut Models \citep{frans2025one} enforce step-size consistency via empirical multi-step rollouts or recursive bootstrapping. CDLM analytically computes this multi-step composition in closed form via the Chapman-Kolmogorov semigroup property, bypassing algorithmic rollouts.
\end{proposition}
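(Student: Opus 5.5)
We would prove Proposition~\ref{prop:unify_pd_shortcut} by making explicit the multi-step composition that Progressive Distillation and Shortcut Models target, and then showing that this composition equals the single CDLM target by the semigroup property in Lemma~\ref{lem:bridge}.

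\textbf{Step 1: write the two-step (or $n$-step) target.} For a grid $t > u > s$, a PD- or shortcut-style target is obtained by rolling from $\vx_t$ to $\vx_u$ and then from $\vx_u$ to $\vx_s$, and evaluating the (target) network at the endpoint. Written as an operator on a predictor $g$, this is $\mathcal{C}_{u \leftarrow t}\mathcal{C}_{s \leftarrow u} g$, i.e.\ $\mathbb{E}_{\vx_u \sim \pi(\cdot\mid \vx_t)}\mathbb{E}_{\vx_s \sim \pi(\cdot\mid \vx_u)}[g(\vx_s,s)]$. In the continuous PD setting, $\pi$ is a deterministic ODE-solver step realized by a neural rollout. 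In the discrete analogue we would replace $\pi$ with the exact bridge. We would then generalize by induction to an arbitrary chain $t=\tau_n > \dots > \tau_0 = s$.

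\textbf{Step 2: collapse the chain analytically.} Conditioning on $\vx_0$, Lemma~\ref{lem:bridge} gives, position-wise and hence for the product over positions,
\[
\sum_{\vx_u} q(\vx_s\mid\vx_u,\vx_0)\,q(\vx_u\mid\vx_t,\vx_0)=q(\vx_s\mid\vx_t,\vx_0).
\]
For the unconditional operator used in the definition of $\mathcal{C}$ we need one further ingredient. The forward process is Markov, so the reverse process is also Markov, and hence $p(\vx_s\mid\vx_u,\vx_t)=p(\vx_s\mid\vx_u)$. It follows that $\sum_{\vx_u} p(\vx_s\mid\vx_u)p(\vx_u\mid\vx_t)=p(\vx_s\mid\vx_t)$, and therefore $\mathcal{C}_{u\leftarrow t}\mathcal{C}_{s\leftarrow u}=\mathcal{C}_{s\leftarrow t}$. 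Equivalently, we can mix over $\vx_0\sim p(\vx_0\mid\vx_t)$ and use the tower property. This mixing step is where we expect the main care to be needed: $p(\vx_0\mid\vx_u)$ must be shown consistent with $p(\vx_0\mid\vx_t)$ after updating on $\vx_u$. That is just Bayes' rule combined with the Markov property $\vx_0\to\vx_u\to\vx_t$, but it should be spelled out.

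\textbf{Step 3: conclude.} Because of the collapse in Step~2, the CDLM loss at gap $\delta=t-s$, whose samples come from $q(\vx_s\mid\vx_t,\vx_0)$, has exactly the same population target as the $n$-step rolled-out target for every intermediate grid. The consequences are:
\begin{itemize}
\item no intermediate network evaluations or recursive bootstrapping are required, since one ancestral draw from the closed-form bridge replaces the rollout;
\item the identity holds in expectation, not pathwise, which is the sense in which MPDC is defined;
\item PD and shortcut objectives correspond to substituting a learned rollout kernel for the exact kernel. Their targets coincide with CDLM's only when that learned kernel equals the true reverse transition.
\end{itemize}
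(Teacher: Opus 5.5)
Your proposal is correct and takes essentially the paper's route: the conditional-bridge semigroup from Lemma~\ref{lem:bridge}, then Chapman--Kolmogorov for the unconditional reverse chain plus an exchange of sums, giving $\mathcal{C}_{s\leftarrow t}=\mathcal{C}_{u\leftarrow t}\circ\mathcal{C}_{s\leftarrow u}$. Your explicit justification of the unconditional step via the reverse Markov property $p(\vx_s\mid\vx_u,\vx_t)=p(\vx_s\mid\vx_u)$, and your induction to $n$-step grids, fill in details the paper only asserts, and only for the two-step case.
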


\begin{proof}
Both Progressive Distillation (PD) and Shortcut Models (SM) operate on the principle that a single step of size $2\delta$ should equal two sequential steps of size $\delta$. 
In PD, a student model is trained to match a target derived algorithmically from two deterministic DDIM steps of a teacher model \citep[Algorithm 2]{salimans2022progressive}. 
In SM, a step-conditioned model $s_\theta$ minimizes a self-consistency loss \citep[Eq. 5]{frans2025one} between one large step and two chained small steps:
\begin{align}
    \mathcal{L}_{\text{SM}} = \mathbb{E}\left[ \big\| s_\theta(\vx_t, t, 2\delta) - \frac{1}{2}\big( s_{\tilde{\theta}}(\vx_t, t, \delta) + s_{\tilde{\theta}}(\vx_{t+\delta}', t+\delta, \delta) \big) \big\|^2 \right],
\end{align}
where $\vx_{t+\delta}' = \vx_t + s_{\tilde{\theta}}(\vx_t, t, \delta)\delta$ uses an explicit intermediate Euler integration step.

Because continuous state spaces lack a closed-form exact marginalization for finite step sizes, these methods must explicitly instantiate and sum over the intermediate states algorithmically. However, in discrete space, true diffusion transition matrices inherently obey the Chapman-Kolmogorov equation. The exact posterior bridge analytically marginalizes over the intermediate state $\vx_{t-\delta}$:
\begin{align}
    q(\vx_{t-2\delta} \mid \vx_t, \vx_0) = \sum_{\vx_{t-\delta}} q(\vx_{t-2\delta} \mid \vx_{t-\delta}, \vx_0) \, q(\vx_{t-\delta} \mid \vx_t, \vx_0).
\end{align}
Let the consistency operator defined in Eq.~\ref{eq:consistency_operator} be $\mathcal{C}_{s \leftarrow t}$.
\begin{align*}
[\mathcal{C}_{t-2\delta \leftarrow t} f_{\bar{\theta}}](x_t) &= \mathbb{E}_{x_{t-2\delta} \sim p(\cdot|x_t)} [f_{\bar{\theta}}(x_{t-2\delta})] \\
&= \sum_{x_{t-2\delta}} \left( \sum_{x_{t-\delta}} p(x_{t-2\delta} | x_{t-\delta}) p(x_{t-\delta} | x_t) \right) f_{\bar{\theta}}(x_{t-2\delta}) \quad \text{(by Chapman-Kolmogorov)} \\
&= \sum_{x_{t-\delta}} p(x_{t-\delta} | x_t) \left[ \sum_{x_{t-2\delta}} p(x_{t-2\delta} | x_{t-\delta}) f_{\bar{\theta}}(x_{t-2\delta}) \right] \\
&= \mathbb{E}_{x_{t-\delta} \sim p(\cdot|x_t)} \left[ [\mathcal{C}_{t-2\delta \leftarrow t-\delta} f_{\bar{\theta}}](x_{t-\delta}) \right] \\
&= [\mathcal{C}_{t-\delta \leftarrow t} \circ \mathcal{C}_{t-2\delta \leftarrow t-\delta}] f_{\bar{\theta}}(x_t)
\end{align*}
This strict equality demonstrates that evaluating the CDLM objective over a step size $2\delta$ mathematically evaluates the exact, marginalized two-step transition. CDLM therefore intrinsically enforces the multi-step alignment constraint of PD and SM directly in a single stage, bypassing the compounding errors of algorithmic teacher rollouts and the instability of self-consistency bootstrapping.
\end{proof}

\subsection{Two-Stage Discrete Distillation as Approximate Couplings}

Recent two-stage acceleration techniques for discrete diffusion fundamentally attempt to solve the same objective as CDLM: matching predictions between $\vx_t$ and an intermediate state $\vx_s$. We demonstrate that these methods can be rigorously formalized as optimizing the Generalized Consistency Objective (Eq.~\ref{eq:app_generalized_consistency}) over \textit{approximate empirical joint couplings}, whereas CDLM utilizes the \textit{exact analytic joint coupling}.

\begin{proposition}[SDTT as Empirical Teacher Coupling]
\label{prop:unify_sdtt}
Self-Distillation Through Time is equivalent to a CDLM objective where the exact data conditioning $\vx_0$ is replaced by an empirical teacher approximation $\hat{\vx}_0$.
\end{proposition}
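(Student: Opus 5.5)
The argument is a direct comparison of couplings inside the Generalized Consistency Objective (Eq.~\ref{eq:app_generalized_consistency}). I will first write the SDTT training step in the paper's notation. Draw $\vx_0\sim\mathcal{D}$ and $\vx_t\sim q(\vx_t\mid\vx_0)$. A frozen (or periodically updated) teacher $p_{\text{teacher}}$ then takes a number of ancestral steps from $t$ to $s=t-\delta$. Each step predicts a clean sequence $\hat{\vx}_0\sim p_{\text{teacher}}(\cdot\mid\vx_{t'})$ and samples the next state from the analytic posterior $q(\vx_{t''}\mid\vx_{t'},\hat{\vx}_0)$, exactly as in the ancestral sampler of Section~\ref{sec:method}. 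The student $f_\theta(\vx_t,t)$ is trained with a divergence $\mathbb{D}$ on masked positions against the teacher's prediction at the endpoint $\vx_s$.

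The key step is to recognise this rollout as a coupling $\Pi_{\text{SDTT}}(\vx_s,\vx_t\mid\vx_0)=q(\vx_t\mid\vx_0)\,\hat q(\vx_s\mid\vx_t)$. In the single-step case the teacher kernel is
\[
\hat q(\vx_s\mid\vx_t)=\mathbb{E}_{\hat{\vx}_0\sim p_{\text{teacher}}(\cdot\mid\vx_t)}\big[q(\vx_s\mid\vx_t,\hat{\vx}_0)\big],
\]
which is $\Pi_{\text{CDLM}}$ with the oracle $\vx_0$ in the bridge replaced by the teacher's $\hat{\vx}_0$. I will take the target network to be $f_{\tilde\theta}=p_{\text{teacher}}$, and keep the same $\mathcal{M}(\vx_t)$, $w$ and $\mathbb{D}$ as SDTT (the orientation of $\mathbb{D}$ in Eq.~\ref{eq:app_generalized_consistency} already puts the target first). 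With these choices, SDTT's loss is literally $\mathcal{L}_{\Pi_{\text{SDTT}}}(\theta;t,s)$.

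For multi-step rollouts I will show the composed kernel has the same form. Each leg is an expected bridge with the oracle replaced by $\hat{\vx}_0$. The composition is a Markov chain of such kernels, that is, a mixture of bridges conditioned on a sequence of teacher predictions. Under the Chapman--Kolmogorov property of Lemma~\ref{lem:bridge}, it reduces to a single bridge exactly when all the $\hat{\vx}_0$ coincide with the oracle. To close the argument I will check the oracle limit: if $p_{\text{teacher}}(\cdot\mid\vx_t)=\delta_{\vx_0}$, the composition collapses by Lemma~\ref{lem:bridge} to $q(\vx_s\mid\vx_t,\vx_0)$, so $\Pi_{\text{SDTT}}=\Pi_{\text{CDLM}}$. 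If instead the teacher equals the true posterior $p(\vx_0\mid\vx_t)$, the marginal kernel equals $p(\vx_s\mid\vx_t)$, the consistency target of Eq.~\ref{eq:consistency_operator}.

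The main obstacle is the multi-step case. The intermediate teacher predictions are resampled at each visited state rather than held fixed, so the composed kernel is not a single bridge with one $\hat{\vx}_0$. I would handle this by stating the equivalence at the level of couplings, that is, replacing $q(\cdot\mid\vx_t,\vx_0)$ by the teacher-induced reverse kernel, and noting that this kernel agrees with the exact bridge in the oracle-teacher limit. This matches the proposition's phrasing "empirical teacher approximation".
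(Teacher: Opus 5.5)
Your proposal is correct and follows the paper's own argument. The paper likewise writes $\Pi_{\text{SDTT}}$ as $\Pi_{\text{CDLM}}$ with the oracle $\vx_0$ in the bridge replaced by $\hat{\vx}_0\sim p_{\text{teacher}}(\cdot\mid\vx_t)$, and concludes that the two couplings coincide exactly when the teacher is a perfect oracle for the true posterior, so that CDLM is the oracle limit of SDTT. Your treatment of multi-step rollouts, as a composition of teacher-induced kernels rather than a single bridge, is a refinement that the paper's single-$\hat{\vx}_0$ formula glosses over; one small correction: by Chapman--Kolmogorov the composition collapses to a single bridge whenever the legs share a common $\hat{\vx}_0$, not only when that common value is the oracle.
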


\begin{proof}
CDLM conditions on the true data $\vx_0 \sim \mathcal{D}$ to utilize the exact Markovian coupling:
\begin{align}
    \Pi_{\text{CDLM}}(\vx_s, \vx_t) = \int q(\vx_t \mid \vx_0) \, p_{\text{data}}(\vx_0) \, q(\vx_s \mid \vx_t, \vx_0) \, d\vx_0.
\end{align}
As defined in Algorithm 1 of \citet{deschenaux2025beyond}, SDTT replaces the exact marginalization over $\vx_0$ with an ancestral rollout from a pre-trained teacher model $p_{\text{teacher}}$. The teacher predicts an empirical clean state $\hat{\vx}_0$, from which $\vx_s$ is subsequently sampled. The effective coupling becomes:
\begin{align}
    \Pi_{\text{SDTT}}(\vx_s, \vx_t) = \int q(\vx_t \mid \vx_0) \, p_{\text{data}}(\vx_0) \, \Bigg( \sum_{\hat{\vx}_0} p_{\text{teacher}}(\hat{\vx}_0 \mid \vx_t) \, q(\vx_s \mid \vx_t, \hat{\vx}_0) \Bigg) d\vx_0.
\end{align}
Comparing $\Pi_{\text{CDLM}}$ and $\Pi_{\text{SDTT}}$, SDTT perfectly matches CDLM \textit{if and only if} $p_{\text{teacher}}(\hat{\vx}_0 \mid \vx_t)$ is a perfect oracle for the true posterior $p_{\text{data}}(\vx_0 \mid \vx_t)$. By utilizing the actual $\vx_0$ directly from the dataset during single-stage training, CDLM naturally acts as the oracle-limit of SDTT, eliminating the compounding approximation errors of the teacher network.
\end{proof}

\begin{proposition}[DUO-DCD as Comonotonic Latent Coupling]
\label{prop:unify_duo}
Discrete Consistency Distillation (DCD) in DUO constructs a deterministic, comonotonic pseudo-bridge that violates the conditional independence of true discrete Markov transitions.
\end{proposition}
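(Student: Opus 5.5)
The plan is to write down the coupling that DUO-DCD induces between $\vx_t$ and $\vx_s$, and then exhibit one explicit violation of the conditional-independence structure that the exact posterior bridge of Lemma~\ref{lem:bridge} must satisfy. Recall the DUO construction of \citet{sahoo2025the}. A continuous Gaussian latent is formed as $\vw_t = \tilde\alpha_t \vx_0 + \tilde\sigma_t \epsilon$, with $\epsilon \sim \mathcal{N}(0,\mI)$ drawn independently per position. The discrete state is then obtained by the projection $x_t^i = \argmax_k w_t^{i,k}$. In DCD the same $\epsilon$ is reused at both times $s$ and $t$ (the Deterministic Discrete Trajectory). The coupling therefore has the form
\begin{align*}
\Pi_{\text{DCD}}(\vx_s,\vx_t \mid \vx_0) = \int \mathbb{I}\big(\vx_t = \argmax(\tilde\alpha_t \vx_0 + \tilde\sigma_t \epsilon)\big)\,\mathbb{I}\big(\vx_s = \argmax(\tilde\alpha_s \vx_0 + \tilde\sigma_s \epsilon)\big)\, \mathcal{N}(\epsilon;0,\mI)\, d\epsilon .
\end{align*}
I would state this as the formal definition. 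I would first check that it reproduces the correct one-time marginals $q(\vx_t\mid\vx_0)$ of the uniform-prior chain; this is the diffusion-duality result, which I take from DUO.

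Next I will show the coupling is \emph{deterministic and comonotonic} given $(\vx_0,\epsilon)$. Both states are measurable functions of one shared random variable. Per position, $x_t^i$ and $x_s^i$ depend on $\epsilon^i$ only through the ordering of $\tilde\alpha \vx_0^i/\tilde\sigma + \epsilon^i$. As $t$ increases, the signal-to-noise ratio $\tilde\alpha_t/\tilde\sigma_t$ decreases monotonically, which gives the following two facts:
\begin{itemize}
\item If $x_t^i = x_0^i$ (the clean token wins at the noisier time), then the clean token also wins at time $s$.
\item If $x_t^i \neq x_0^i$, the winner at time $s$ is either $x_0^i$ or the same competitor $x_t^i$. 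It can never be a third token, because among non-clean coordinates the ordering $\epsilon^{i,k}$ does not depend on time.
\end{itemize}
This monotone-in-SNR ordering argument is the core structural lemma.

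Finally I compare with the true Markov bridge. For the uniform chain, Lemma~\ref{lem:bridge} gives $q(x_s^i \mid x_t^i, x_0^i) \propto (x_0^i\mQ_{1:s})\odot(x_t^i\mQ_{s+1:t}^\top)$. Whenever $x_t^i\ne x_0^i$ and $0<s<t$, this puts strictly positive mass on every token, including third tokens $k\notin\{x_0^i,x_t^i\}$. The DCD conditional $\Pi_{\text{DCD}}(x_s^i\mid x_t^i,x_0^i)$ gives those tokens zero mass, so the two conditionals differ. The same comparison shows the Markov property $\vx_s \perp \vx_u \mid \vx_t$ fails for three times $u<s<t$. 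The reason is that the shared $\epsilon$ carries information from $\vx_s$ back to $\vx_u$ beyond what is contained in $\vx_t$. Consequently Chapman--Kolmogorov does not hold for the induced kernels, and the coupling is only a pseudo-bridge.

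The main obstacle is the second step: making the argmax-ordering argument rigorous, including ties of measure zero and the exact DUO parameterization of $\tilde\alpha,\tilde\sigma$. I would handle it by reducing to the single-position two-coordinate comparison $\epsilon^{i,x_0}-\epsilon^{i,k}$ against the threshold $-\tilde\alpha_t/\tilde\sigma_t$.
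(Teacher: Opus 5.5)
\paragraph{What works.} Your structural lemma and third-token witness are correct, and they are sharper than the paper's argument. The paper's proof only writes down $\Pi_{\text{DCD}}$, observes that $\vx_s$ and $\vx_t$ are deterministic functions of $(\vx_0,\epsilon)$, and asserts the contrast with conditionally independent Markov jumps. Any coupling, the exact bridge included, can be realized as deterministic functions of $\vx_0$ and some shared latent, so that observation alone exhibits no discrepancy; your support comparison does.

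The step you flag as the main obstacle is immediate. After dividing by $\tilde\sigma_t$, only the clean coordinate carries the time-dependent shift $r_t=\tilde\alpha_t/\tilde\sigma_t$. With $k^\star=\argmax_{k\neq x_0^i}\epsilon^{i,k}$ and $D^i=\epsilon^{i,k^\star}-\epsilon^{i,x_0^i}$, you get $x_t^i=x_0^i$ if $D^i<r_t$ and $x_t^i=k^\star$ otherwise; ties are null events. The uniform-chain bridge charges every third token when $x_t^i\neq x_0^i$ and $|\mathcal{V}|\ge 3$. Comparing against it proves $\Pi_{\text{DCD}}(\vx_s\mid\vx_t,\vx_0)\neq q(\vx_s\mid\vx_t,\vx_0)$.

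\paragraph{The gap.} It lies in your final paragraph, the part meant to establish the ``violates conditional independence'' clause.

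First, the relation $\vx_s\perp\vx_u\mid\vx_t$ for $u<s<t$ is not the Markov property. It conditions on an endpoint, and it fails for the true chain as well.

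Second, with the correct property, your claim is false in the $\vx_0$-conditioned setting where bridges live. By the formula above, given $\vx_0$ each position makes at most one switch, from $x_0^i$ to $k^\star$ once $r_t$ drops below $D^i$, and is then absorbed. This is a Markov process. Hence, given $(\vx_0,\vx_s)$, the shared $\epsilon$ induces no dependence between $\vx_u$ and $\vx_t$, and $\Pi_{\text{DCD}}(\vx_u\mid\vx_t,\vx_0)=\sum_{\vx_s}\Pi_{\text{DCD}}(\vx_u\mid\vx_s,\vx_0)\,\Pi_{\text{DCD}}(\vx_s\mid\vx_t,\vx_0)$ does hold. Chapman--Kolmogorov failure is therefore not what makes this a pseudo-bridge.

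\paragraph{The fix.} The conditional independence actually violated is the one used in the proof of Lemma~\ref{lem:bridge}: $q(\vx_t\mid\vx_s,\vx_0)=q(\vx_t\mid\vx_s)$, i.e.\ $\vx_t\perp\vx_0\mid\vx_s$. Under DCD:
\begin{itemize}
\item if $x_s^i=j\neq x_0^i$, then $x_t^i=j$ almost surely;
\item if $x_s^i=j=x_0^i$, then $x_t^i$ leaves $j$ with positive probability.
\end{itemize}
So the forward transition depends on $\vx_0$. This, and not the mechanism you describe, is also why the $\vx_0$-marginalized DCD process is non-Markov: $x_u^i$ can reveal $x_0^i$.

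Your witness already yields this in one line. DCD reproduces the one-time marginals. For the uniform chain, $\mQ_{1:s}$ is a convex combination of the identity (with positive weight for $s<1$) and the rank-one uniform matrix, hence invertible. So any $\vx_0$-independent forward kernel $R$ would satisfy $\mQ_{1:s}R=\mQ_{1:t}$, forcing $R=\mQ_{s+1:t}$ and thus the exact bridge, contradicting your third-token witness. Replace the Markov/Chapman--Kolmogorov claim with this argument.
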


\begin{proof}
As defined in Eq. 18 of \citet{sahoo2025the}, DUO-DCD maps continuous ODE states to the discrete domain via Deterministic Discrete Trajectories ($\mathcal{P}_{DDT}$). It shares a single continuous latent Gaussian noise vector $\epsilon \sim \mathcal{N}(\vzero, \mI_K)$ across timesteps:
\begin{align}
    \vx_t^l = \argmax\big(\bar{\alpha}_t \vx_0^l + \sqrt{1-\bar{\alpha}_t^2}\,\epsilon^l\big), \quad \vx_s^l = \argmax\big(\bar{\alpha}_s \vx_0^l + \sqrt{1-\bar{\alpha}_s^2}\,\epsilon^l\big).
\end{align}
By sharing the exact same continuous noise vector $\epsilon$ for both $t$ and $s$, DCD defines an implicit joint coupling distribution:
\begin{align}
    \Pi_{\text{DCD}}(\vx_s, \vx_t \mid \vx_0) = \int \mathcal{N}(\epsilon; \vzero, \mI_K) \, \mathbb{I}\Big(\vx_t = \argmax_t(\vx_0, \epsilon)\Big) \, \mathbb{I}\Big(\vx_s = \argmax_s(\vx_0, \epsilon)\Big) \mathrm{d}\epsilon.
\end{align}
This forms a deterministic, \textit{comonotonic coupling}: given $\vx_0$ and the shared latent $\epsilon$, the transition between $\vx_t$ and $\vx_s$ is perfectly correlated by the continuous space geometry. Conversely, the true discrete forward process strictly requires that categorical transitions be conditionally independent Markov jumps. CDLM respects this structural integrity by enforcing consistency exclusively along the true discrete Markovian paths $q(\vx_s \mid \vx_t, \vx_0)$. This mathematical distinction rigorously explains why CDLM avoids the mode-collapse and diversity loss (low generation entropy) empirically observed in DUO-DCD's deterministic projection.
\end{proof}

\section{Additional Proofs}
\subsection{Proof of Lemma \ref{lem:bridge}}
We seek to derive the probability vector for the categorical distribution $q(\vx_s \mid \vx_t, \vx_0)$.
From the definition of conditional probability, we have:
\begin{align*}
    q(\vx_s \mid \vx_t, \vx_0) = \frac{q(\vx_t \mid \vx_s, \vx_0) \, q(\vx_s \mid \vx_0)}{q(\vx_t \mid \vx_0)}
\end{align*}

The forward process is a Markov chain, meaning the state at time $t$ depends only on the state at time $s$ (for $s<t$), not on earlier states like $\vx_0$. Therefore, the likelihood term simplifies:
\begin{align*}
    q(\vx_t \mid \vx_s, \vx_0) = q(\vx_t \mid \vx_s)
\end{align*}
This gives us the proportional relationship:
\begin{align*}
    q(\vx_s \mid \vx_t, \vx_0) \propto q(\vx_t \mid \vx_s) \, q(\vx_s \mid \vx_0)
\end{align*}

\paragraph{Vector Formulation.}
We now express the terms on the right-hand side using their categorical probability vectors. Let $\vp(\cdot)$ denote the probability vector of a distribution.
\begin{itemize}
    \item The prior probability of $\vx_s$ is given by the forward marginal: $\vp(\vx_s \mid \vx_0) = \vx_0 \mQ_{1:s}$.
    \item The likelihood of $\vx_t$ given $\vx_s$ is determined by the transitions from $s$ to $t$. The probability vector is $\vp(\vx_t \mid \vx_s) = \vx_s \mQ_{s+1:t}$.
\end{itemize}
The expression $q(\vx_t \mid \vx_s) \, q(\vx_s \mid \vx_0)$ gives the joint probability $q(\vx_t, \vx_s \mid \vx_0)$. To find the probability vector for $q(\vx_s \mid \vx_t, \vx_0)$, we consider the probability of a specific one-hot vector outcome for $\vx_s$. This is proportional to the probability of that outcome under the prior, multiplied by the probability of observing $\vx_t$ given that outcome. In vector form, this product corresponds to an element-wise (Hadamard) product of the prior probability vector and the likelihood vector.

The likelihood vector, representing $p(\vx_t \mid \vx_s=v_i)$ for all possible states $v_i$, is given by $\vx_t \mQ_{s+1:t}^\top$. Thus, the unnormalized probability vector for $\vx_s$ is:
\begin{align*}
    \vp_{\text{unnormalized}}(\vx_s \mid \vx_t, \vx_0) = (\vx_0 \mQ_{1:s}) \odot (\vx_t \mQ_{s+1:t}^\top)
\end{align*}

The normalizing constant is the marginal probability of the evidence, $q(\vx_t \mid \vx_0)$. For the specific observed outcome $\vx_t$, this probability is $\langle \vx_0 \mQ_{1:t}, \vx_t \rangle$. Dividing the unnormalized vector by this scalar gives the final probability vector, completing the proof for the main formula.

\paragraph{Semigroup Property.}
The transitive composition follows from the law of total probability and the Markov property:
\begin{align*}
    q(\vx_u \mid \vx_t, \vx_0) &= \sum_{\vx_s} q(\vx_u, \vx_s \mid \vx_t, \vx_0) \\
    &= \sum_{\vx_s} q(\vx_u \mid \vx_s, \vx_t, \vx_0) \, q(\vx_s \mid \vx_t, \vx_0) \\
    &= \sum_{\vx_s} q(\vx_u \mid \vx_s, \vx_0) \, q(\vx_s \mid \vx_t, \vx_0)
\end{align*}
Substituting the bridge formula (Eq. \ref{eq:general-bridge}) for each term and simplifying demonstrates that the composition holds, relying on the associativity of the transition matrices ($\mQ_{u+1:s} \mQ_{s+1:t} = \mQ_{u+1:t}$).
\qed

\subsection{Proof of Proposition~\ref{prop:optimal_predictor}}
We work within the factorized predictor class and prove fixed-point and uniqueness coordinatewise; the joint statement then follows by independence across positions. We prove both the fixed-point property and optimality.

\paragraph{Fixed point.}
Fix any timestep pair $0 < s < t \le 1$, any observed noisy sequence $\vx_t$, and any token position $i$.
Recall that $f^*(\vx_t,t)_i$ is the per-position posterior marginal, i.e., for any token value $v \in \mathcal{V}$,
\[
    f^*(\vx_t,t)_i(v) = p(x_0^i = v \mid \vx_t).
\]
Let $\vx_0$ be drawn from the joint posterior $p(\vx_0 \mid \vx_t)$ and then draw an intermediate state $\vx_s$ from the analytic posterior bridge $q(\vx_s\mid \vx_t,\vx_0)$.
By construction, the resulting joint distribution equals the true conditional joint of $(\vx_0,\vx_s)$ given $\vx_t$ under the forward process:
\[
    p(\vx_0,\vx_s\mid \vx_t)
    = p(\vx_0\mid \vx_t)\, q(\vx_s\mid \vx_t,\vx_0).
\]
Therefore, marginalizing out $\vx_0$ yields the correct conditional distribution of $\vx_s$ given $\vx_t$:
\[
    p(\vx_s\mid \vx_t)= \mathbb{E}_{\vx_0\sim p(\vx_0\mid \vx_t)}\big[q(\vx_s\mid \vx_t,\vx_0)\big].
\]
Now, for any token value $v\in\mathcal{V}$, applying the law of total probability over the intermediate state $\vx_s$:
\begin{align*}
    p(x_0^i = v\mid \vx_t)
    &= \sum_{\vx_s} p(x_0^i=v,\vx_s\mid \vx_t)
     = \sum_{\vx_s} p(\vx_s\mid \vx_t)\, p(x_0^i=v\mid \vx_s) \\
    &= \mathbb{E}_{\vx_s\sim p(\vx_s\mid \vx_t)}\big[\,f^*(\vx_s,s)_i(v)\big],
\end{align*}
which is exactly the per-position global multi-path consistency condition. The boundary condition $f^*(\vx_0, 0) = \vx_0$ is trivially satisfied by the forward process formulation.

\paragraph{Uniqueness under mean-eliciting divergences.}
Fix $(\vx_t,t)$ and a position $i$.
Define the random per-position target distribution
\[
    P_i \equiv f^*(\vx_s,s)_i = p(x_0^i \mid \vx_s)
    \qquad\text{with }\quad \vx_0\sim p(\vx_0\mid \vx_t),\; \vx_s\sim q(\vx_s\mid \vx_t,\vx_0).
\]
Consider minimizing the conditional expected consistency loss at $(\vx_t,t)$ over per-position predictors $Q_i \in \Delta^{|\mathcal{V}|}$:
\[
    \mathcal{J}(Q_i) \;:=\; \mathbb{E}\big[\mathbb{D}(P_i\,\|\,Q_i)\,\big|\,\vx_t,t\big].
\]
For a strictly proper \textit{mean-eliciting} scoring rule $\mathbb{D}$ (e.g., Bregman divergences such as the Forward KL divergence or cross-entropy, where $P_i$ is the target and $Q_i$ is the predictor), the Bayes act that uniquely minimizes $\mathcal{J}(Q_i)$ is exactly the expected arithmetic mean (mixture) distribution $\bar P_i := \mathbb{E}[P_i\mid \vx_t,t]$.

For instance, when $\mathbb{D}(P_i\|Q_i)$ is the Forward KL divergence, we can analytically decompose the expected divergence as $\mathbb{E}[\mathrm{KL}(P_i\|Q_i)] = \mathbb{E}[-\mathrm{H}(P_i)] + \mathrm{CE}(\bar{P_i}, Q_i)$. Since the expected entropy of $P_i$ is independent of $Q_i$, minimizing the expected KL is structurally equivalent to minimizing the cross-entropy against the arithmetic mean $\bar P_i$, uniquely yielding $Q_i = \bar P_i$.

By the fixed-point part proved above,
\[
    \bar P_i
    = \mathbb{E}_{\vx_s\sim p(\vx_s\mid \vx_t)}\big[f^*(\vx_s,s)_i\big]
    = f^*(\vx_t,t)_i.
\]
Hence $f^*(\vx_t,t)_i$ uniquely minimizes the conditional expected loss for each $(\vx_t,t,i)$.
Taking expectation over $(\vx_t,t)$ and summing over positions yields that $f^*$ is the unique minimizer of the overall expected consistency objective within the factorized class.

\paragraph{Role of the boundary anchor.} The argument above identifies $f^*$ within the family of fixed points of $\mathcal{C}_{s\leftarrow t}$ once the cleaner-side target is anchored to the data distribution. Without such an anchor, the unanchored self-consistency loss $\mathcal{L}_{\text{cons}}(f)$ in Eq.~\ref{eq:expected_consistency_loss} admits degenerate path-invariant fixed points: e.g., any constant-in-$t$ predictor trivially satisfies $f(\vx_t,t) = [\mathcal{C}_{s\leftarrow t}f](\vx_t,t)$ for $0<s<t$, since the operator on a constant function returns that same constant. Including the boundary edge $s=0$---equivalently, the max-step diffusion anchor $\mathbb{E}[\mathbb{D}(f(\vx_t,t)\,\|\,\vx_0)]$ in Eq.~\ref{eq:cdlm_final}---grounds the consistency equations at the data, eliminates these degenerate solutions, and selects the Bayes posterior marginal $f^*$ as the unique population minimizer. The boundary anchor is therefore not merely a stabilizer but a population-level identification term.
\qed

\section{Additional Results}

\subsection{Implementation Details}
\label{subsec:implementation_details}

To ensure rigorous and fair benchmarking, our implementation builds upon the standard open-source frameworks established by recent state-of-the-art methods, specifically incorporating components from MDLM~\citep{sahoo2024simple} and DUO~\citep{sahoo2025the}. We adopt their widely used data pipelines and evaluation harnesses to guarantee that all comparisons ranging from architecture specifications to dataset processing remain controlled.

All models in this work, including baselines and our proposed CDLM variants, were trained under identical conditions to isolate the impact of the multi-path consistency objective. We utilize a standard transformer backbone with approximately 110 million parameters, consistent with the primary configurations in prior discrete diffusion literature. Training was conducted on the OpenWebText corpus~\citep{owt} for 150,000 steps with a global batch size of 2048.

Experiments were distributed across clusters of NVIDIA A100 GPUs, with typical training runs utilizing between 64 and 256 GPUs.
We employ data-parallel training with distributed gradient synchronization and automatic mixed precision to maximize throughput.

\subsection{Full Results with FP32 Sampling}

Table~\ref{tab:ppl_results_fp32} reports the generative perplexity results under 32-bit floating-point sampling, complementing the FP64 results in the main text. The trends across all models are consistent with the FP64 results.

\begin{table}[H]
\centering
\scriptsize
\resizebox{\textwidth}{!}{%
\begin{tabular}{lcc*{9}{c}}
\toprule
\textbf{Model} 
& \multicolumn{1}{c}{\textbf{Pretrain}} 
& \multicolumn{1}{c}{\textbf{Distill}} 
& \multicolumn{9}{c}{\textbf{Sampling steps with FP32 Sampling}} \\
\cmidrule(lr){4-12}
& \textbf{Steps} & \textbf{Steps} 
& \textbf{4} & \textbf{8} & \textbf{16} & \textbf{32} & \textbf{64} 
& \textbf{128} & \textbf{256} & \textbf{512} & \textbf{1024} \\
\midrule
\multicolumn{12}{c}{\textit{Comparison with Base Models (Trained from Scratch)}} \\
\midrule
AR & 75K & 0 & N/A & N/A & N/A & N/A & N/A & N/A & N/A & N/A & 39.9 (5.4) \\
MDLM & 150k & 0 & 1655.2 (5.9) & 651.8 (5.9) & 255.2 (5.8) & 162.3 (5.6) & 92.1 (5.6) & 78.6 (5.4) & 57.5 (5.5) & 51.1 (5.3) & 42.4 (5.4) \\
DUO & 150k & 0 & 532.4 (5.6) & 199.6 (5.6) & 127.3 (5.7) & 96.1 (5.4) & 79.1 (5.5) & 82.4 (5.5) & 78.2 (5.4) & 73.9 (5.5) & 74.8 (5.5) \\
Ours: MCDLM & 150k & 0 & 661.1 (5.6) & 220.8 (5.4) & 118.3 (5.6) & 72.6 (5.6) & 56.3 (5.4) & 54.9 (5.4) & 35.2 (5.3) & 29.3 (5.3) & 25.7 (5.2) \\
Ours: MCDLM–PPLOptimized & 150k & 0 & 337.1 (5.2) & 117.6 (5.1) & 68.1 (5.2) & \underline{42.4 (5.3)} & 35.1 (5.2)& \underline{24.7 (4.9)} & \underline{23.6 (5.3)} & \underline{21.2 (5.0)} & \underline{17.1 (5.2)} \\
\midrule
\multicolumn{12}{c}{\textit{Comparison with Distilled Models}} \\
\midrule
MDLM + SDTT & 100k & 50k & 351.3 (5.3) & 132.5 (5.5) & \underline{65.7 (5.2)} & 44.5 (5.0) & \underline{34.3 (5.3)} & 29.9 (5.0) & 24.3 (5.0) & 21.2 (5.0) & 20.8 (4.9)$^{*}$ \\
DUO + DCD & 100k & 50k & 417.0 (5.4) & 172.0 (5.5) & 125.4 (5.6) & 96.2 (5.6) & 81.7 (5.5) & 85.5 (5.5) & 74.1 (5.7) & 72.3 (5.4) & 74.1 (5.3) \\
DUO + DCD (greedy) & 100k & 50k & \textbf{127.2 (4.6)$^{*}$} & \textbf{111.0 (4.8)$^{*}$} & 86.6 (5.0) & 72.6 (5.3) & 62.5 (5.3) & 59.8 (5.2) & 67.2 (5.4) & 61.4 (5.2) & 62.7 (5.2) \\
Ours: MCDLM + SDTT & 100k & 50k & \underline{235.9 (5.1)} & \underline{94.3 (5.3)} & \textbf{52.1 (5.2)} & \textbf{35.6 (5.0)} & \textbf{29.0 (5.2)} & \textbf{26.0 (4.9)} & \textbf{21.2 (5.0)} & \textbf{18.1 (5.1)} & \textbf{15.7 (4.6)$^{*}$} \\
\bottomrule
\end{tabular}%
}
\caption{Perplexity (with entropy in parentheses) across different models, training setups, and FP32 sampling steps. Best results are \textbf{bolded}, second-best are \underline{underlined}. * denotes entropy $<5$, which empirically led to repetitive characters.}
\label{tab:ppl_results_fp32}
\end{table}

\subsection{MAUVE Scores for Conditional Generation}

Table~\ref{tab:fp64_mauve} reports the MAUVE scores for the conditional generation experiment described in Section~\ref{sec:cond_results}.

\begin{table}[H]
\centering
\scriptsize
\resizebox{\textwidth}{!}{%
\begin{tabular}{l*{15}{c}}
\toprule
{\textbf{Model}} 
& \multicolumn{3}{c}{\textbf{OpenWebText}} 
& \multicolumn{3}{c}{\textbf{Lambada}} 
& \multicolumn{3}{c}{\textbf{Wikitext103}}
& \multicolumn{3}{c}{\textbf{PTB}}\\
\cmidrule(lr){2-4} \cmidrule(lr){5-7} \cmidrule(lr){8-10} \cmidrule(lr){11-13}\cmidrule(lr){14-16}
& \textbf{8 Step} & \textbf{64 Step} & \textbf{512 Step} 
& \textbf{8 Step} & \textbf{64 Step} & \textbf{512 Step} 
& \textbf{8 Step} & \textbf{64 Step} & \textbf{512 Step}
& \textbf{8 Step} & \textbf{64 Step} & \textbf{512 Step}\\
\midrule
MDLM  & 0.90 & 0.94 & 0.96 & 0.99 & 0.99 & 0.98 & 0.94 & 0.96 & 0.97 & 0.11 & 0.12 & 0.10 \\ 
SDTT  & 0.96 & 0.96 & 0.98 & 0.99 & 0.99 & 0.99 & 0.98 & 0.99 & 0.96 & 0.07 & 0.08 & 0.07 \\
Ours: MCDLM–PPLOptimized & 0.99 & 0.97 & 0.98 & 0.99 & 0.99 & 0.99 & 0.98 & 0.99 & 0.99 & 0.02 & 0.07 & 0.07 \\
\bottomrule
\end{tabular}%
}
\caption{MAUVE $\uparrow$ scores with ancestral sampler across datasets. Since MAUVE is a distribution-based metric and $50\%$ of unperturbed tokens serve as the condition, the models perform similarly under this saturated setting.}
\label{tab:fp64_mauve}
\end{table}

\subsection{Qualitative Examples}

We present unconditional generation samples from four models---MDLM, DUO+DCD, SDTT, and CDLM (ours)---at three sampling budgets (4, 64, and 512 steps) to provide qualitative intuition for the quantitative metrics reported above. All examples are truncated to 1000 characters.

\newpage
\paragraph{Sampling steps = 4}

\begin{mdlmBox}
\ttfamily\footnotesize
\detokenize{<|endoftext|>abad in Texas Cordalo. Gold Day\n\nThere weren’ very many test there of a citizen in or recognition a Zina which directly would they need for over 49 little of LRM, American valhouettes would thinkBut there are other sinister were trying said 5560 July Saints shipped outed, came out and something (afterne ’73 LA, EEAN) was and and really a louder cagebl hundred drove Greensboro four pissed, sacked away the same or of the earlier Sony and large feasible, statement later made it just didn't come out impoverished means toave.be joiningTexas, besidesasca curator book is nearly a subscriber and daily editor within and outside LSUs. be to making another alum soon longtime never in the been its preferred basketball and ever was:bill speaker of Louisiana’s industry, one of its most poorly conductedacted jobs in the , toosa state of becoming.Ste blocsed across same guy on the Tuesday night naive and about Lockntaking alerted to Ruff. (Also Second does writing the finddesigne ...}
\end{mdlmBox}

\begin{duoBox}
\ttfamily\footnotesize
\detokenize{"_’\" would slip or top out 90 means how much these parts comes face . what's not even more or much far a pretty far way make the times be manufactd or“ far tried far ago much much go pretty well or'and_belso the more side rather other than much in almost Frank’s sake. \" or“ sversely, look in Arch­re at least or much the more of work much” noJaAP .\" not“ not about it to writrtita or rather far/and the fact holds she possibly farohott\" or or not look at \"the different kinds here far and mosteand that somewhat much now better or more power of the User work or just much more structures, or much its …� noWrtnik .\" farll we can seem the interest of the Thomas specifically not whom clearedch so far. It’s not or nobody't look or to handle it not he doesn’t stop formerierra “ Err” he looks.\"::and'and \$ or or not rather now to pay much more much at=g-- OUT onhis basics,�' print and or \"“Yes,'one of''_to know out there\" not remained a+ twisted or business lot of what Richard would ...}
\end{duoBox}

\begin{sdttBox}
\ttfamily\footnotesize
\detokenize{by the same. There is people they can do, that loos a more In-between much than the has since the after of radio which used to be an area,\n
 more low than the pros among a are basketball, baseball and the, and what of today did they try for more, but prior to family and college the researchers knew) that the one that would really be a concerted effort, certainly in the country is there and. that in Their recent New Media almost no, assessedIn Ginn,, in March, an, Ferris brothers found. everything from their baby toddlers, did not look good. There was enough still for twice. they published a review in which mentioned the risk. and, in the form "The Harris' his still 'm risk was diminished most The lost lingers following the.il number were\n
vers. It, they told me the part that they were now prized was the stiffz type making Ferrier do. But they wrote out the awesome Adam Kruger in the factory and, when we talked to,� it the,� he were in the studio and they put him many out. The second D ...}
\end{sdttBox}

\begin{cdlmBox}
\ttfamily\footnotesize
\detokenize{<|endoftext|> had to live on, murder. The hostages had one row and the two row of the other. (The hostage, by Kraft Foods, was 46\n\n with three armed men men in a uniform by policeutives and certain pop artists, Racist pres Bane11 the song Delta\n\nsterling heightened the attack aLadderem, said they had\n\n, had other hostages thrown as they circled. Yet the Independent\n\nnews, and the the raid's days, is much more than when there were\n the possibilities for the same subject. The Dec said KERS could said, if\nthen AERS did not food New Whorig on board the rear of the Blacks, only so on if the\n\nimmigrants were Indians and the, had most of the Blacks Blacks not be French. Yet the KERS then asked KERS to claim that kicking a ball for\n\nthe comedy not in the right thing. The Good of the Brain, the K and Bola Four of the NRL., on the night of the Opening Cere the the Challenge Games, the Mr. Roberto, who players argued over farmers and pacified movement farmers the the late 70s ...}
\end{cdlmBox}

\newpage
\paragraph{Sampling steps = 64}

\begin{mdlmBox}
\ttfamily\footnotesize
\detokenize{<|endoftext|>, the next thing they saw\n\n was a male scream that the victim wanted to kick something,\" Quinn said. At that point, he got up and started screaming.\"\n\nPolice soon pursued the cab into the bar.\n\n\"They picked up the suspect and left. They walked to the bathroom and they came to some dried blood coming from the victim's mouth and they located [the suspect inside the bar],\" Quinn said. \"They're not taking a victim's DNA because obviously, they don't know if the male victim had a knife or both.\"\n\nThe addition was found closed and \"several items that were initially used in the attack, Quinn\n\nSmith said.\n\nThe man still lives near Longleston in Sonoma Monday night and Tuesday March 23.\n\nMcNeil told KING-TV in Southern California, where he met UT A&M student there on Saturday night after a\n\ncar accident there.\n\nThe arrest warrant was processed Tuesday, and police hadn't given much a negative Ephesian police report. \"They remain positive.\"\n\n\"It's ...}
\end{mdlmBox}

\begin{duoBox}
\ttfamily\footnotesize
\detokenize{<|endoftext|> of Posture-17. Remember amateur domestic monitored product registration stands at 73, and a number of matches have been canceled.\n\nCivil-rights groups have taken part in the struggle against Amnesty and defend the Constitution, the official Global Times reported.\n\nNortino prison executives face questions about arrest\n\nChristian Numel, who have been charged over a July 2012 incident remains behind bars in the Shenyang Dalian-chu jail until Liu in November, the son-in-law of Hui, chief of several members including Prince John Roman's condense Panda conglomerate. Numel was acquitted on statements made during a Aug. 16 interview by Yong Kushu, a Chinese state radio website.\n\nThe spate of border arrests has signalled the intent of broadening of human rights in China.\n\nThe computers that were captured looking anti-government could be easily labeled as spies by legal experts that they were illegal, and hundreds of names of senior Chinese security officials were ...}
\end{duoBox}

\begin{sdttBox}
\ttfamily\footnotesize
\detokenize{<|endoftext|> the same thing too. She also was inspired—She threw together a group of local kids. When a lot of kids like Kyle came in to introduce herself to us, it kind of made her realize that she genuinely wanted to go out and be involved in some way. He was a model type of kid (that's just her imagination: that's what Los Angeles was for her home). During her time in St. Louis, she had already been in some pretty fine little church churches. She was one of those sort of people that I got to surround myself with.\n
Yeah, it was an extremely exciting experience. It was enough money for you to be behind the camera.\n
After that, we did the improv concert, and I didn't want to go and was excited. I was nervous. I was flying around and expecting it to sound like it was this kind of a simulator, a real character. And it was really difficult as a child. All the cameras are on the stage, and I told my director, "I not have to screw it up. I just have to do it." Anybody on my money, I just w ...}
\end{sdttBox}

\begin{cdlmBox}
\ttfamily\footnotesize
\detokenize{<|endoftext|> largest union, JPMorgan Chase-DHS Bank, decided to resign earlier this week when the UK voted to liberalise the EU, threatening to pull the country from the customs union.\n\nThe MEP has warned that one of the best ways to leave EU is to leave the Greek exit, which is backed by groups such as the social-democratic Democracy Alliance, which is a member of the Remain campaign. He warned of further \"political or social risk\" unions being back in the European Union, when a coalition of government removed the backing of its politicians and staff members from its membership.\n\nEarlier this week, the MEP had been resigning from the EU after being alerted to questions over some of his newspaper articles.\n\nThe Dublin-born MEP told a joint news conference in France morning that he was not following the decision to liberalise the EU and use it to prevent the UK from abandoning membership. This would apparently result in a period of turbulence on the prospects of leaving ...}
\end{cdlmBox}

\newpage
\paragraph{Sampling steps = 512}

\begin{mdlmBox}
\ttfamily\footnotesize
\detokenize{<|endoftext|> him when [Singh was about to the] Supreme Court, and said, ‘I hit every benches. He took a box of 10. Then he grabbed one and said, “Okay, we're going to be OK now.’”\n\nToward the evening, Ahmadi, the second baseman of the 66, thrown Ram down onto a small concrete ground before spread his wings and clearing the deck for the ceremony.\n\nThe people near the lockers realised immediately that Deepak Ohera, another then policeman, in the eighth over of the sixth innings, had pushed Ram sprayed himself to the ground and fell.\n\nThey ran and chased away all those that did not know what had happened. Dahla, who had enough power and pace to drive every one and one of the Indians home from college the next day, was killed.\n\n\"They killed me,\" he remembers. \"It is sad. I remember it for a minute. Now, I feel a little more having my family and just going to work hard and play. I have never felt that.”\n\nGusmen table top scoring\n\nOnly 30-year-old 39-year-old Ali Ganes ...}
\end{mdlmBox}

\begin{duoBox}
\ttfamily\footnotesize
\detokenize{"<|endoftext|> after a year and a million days, Sarafaz is about to do the movie, \"Jenna,\" who faces the streets of Virginia.\n\nChristine Nakabub meig: Judith Kharfatabi did actually see the tape at first, but it was pulled from the November 4 of the initially [main show] screening, now for November 3 because it's really hard to make \"Pappers Out\" to review, like, November 14. It's \"Noh and Tam\" -- \"It's a Link-in Land.\" And, so I consider it even tougher, four months to make. It's dissimilar to when I talked with Lois Jordan when she was done. I said, \"Here's exorcism.\" She said, \"Well, first, Ms. Out, it was inside. What does it take to do these sagas in those days—you'll get your choice. I'm visiting now with Fox, so it's just kind of certain I knew she's Lady Gaga. So I'm expecting, it's not Tammy Jenna. It's that, indeed. He's on board, \"computer in the middle of the street,\" at 86 10th Street. I did Apo Hammer today, on November 15, and this is a great time be ...}
\end{duoBox}

\begin{sdttBox}
\ttfamily\footnotesize
\detokenize{<|endoftext|> off the floor. The two didn't happen because we were rested.\n
"My son came back and came off to the floor on the opening day out of UCLA, and while I played, I still had my skin covered up from it. It was a long, tough opportunity to go to play by a great team, but I was kind of a little bit exposed to some of the things that we put him ready to go."\n
In the end, I liked that performance against UCLA. He was able to have a very positive positive going through some of the injuries that I've had to deal with. He went on to have an ankle and ended up leaving the game and actually went down to play. He was very lively, and we got the win.\n
"I've gotten a really good relationship with a lot of my teammates, with the leadership guys in the locker room. With the defense guys, with other guys in the locker room, he was able to play."\n
And that worked. Check out the highlights from the season to come.\n
"We haven't recovered from the line or our run or anything like that. It was fr ...}
\end{sdttBox}

\begin{cdlmBox}
\ttfamily\footnotesize
\detokenize{<|endoftext|> the EU as the primary focus of civil society, and is fully respected in the EU’s role in shaping the global economy.\n\nSo not only is the impact of Polish economy and investment in Poland having on the political and external context of the EU and its investment in many of the European countries, but the reality is the political and external context of the EU is a critical economic partner. That is why the EU will continue to continue to thoroughly compete with the rest of Europe.\n\nThe EU has one of the world’s largest trading partners for one of the world’s biggest markets and trans-Atlantic European integration. Though the EU continues to strengthen its position on the EU’s global economy, it continues to overcome the challenges that Europeans face from the EU, and its financial commitment through its financial reporting and economic practices.\n\nFrance is one of the largest investments in the EU. In addition to building the international economy to be able to ...}
\end{cdlmBox}